\newtheorem{theorem}{Theorem}[section]
\newtheorem{lemma}[theorem]{Lemma}
\newtheorem{claim}[theorem]{Claim}
\newtheorem{definition}[theorem]{Definition}
\newcommand{\ignore}[1]{}
\newcommand{\degr}{D}
\newcommand{\cG}{\mathcal{G}}
\newcommand{\cL}{{\cal L}}
\newcommand{\cN}{{\cal N}}
\newcommand{\cP}{\mathcal{P}}
\newcommand{\poly}{\mathrm{poly}}
\newcommand{\rank}{\mathrm{rank}}
\newcommand{\RR}{\mathbb{R}}
\newcommand{\EX}{\hbox{\bf E}}
\renewcommand{\epsilon}{\varepsilon}
\newcommand{\Sec}[1]{\hyperref[sec:#1]{\S\ref*{sec:#1}}} 
\newcommand{\Eqn}[1]{\hyperref[eq:#1]{Eqn.\,(\ref*{eq:#1})}} 
\newcommand{\Fig}[1]{\hyperref[fig:#1]{Fig.\,\ref*{fig:#1}}} 
\newcommand{\Tab}[1]{\hyperref[tab:#1]{Tab.\,\ref*{tab:#1}}} 
\newcommand{\Thm}[1]{\hyperref[thm:#1]{Theorem\,\ref*{thm:#1}}} 
\newcommand{\Fact}[1]{\hyperref[fact:#1]{Fact\,\ref*{fact:#1}}} 
\newcommand{\Lem}[1]{\hyperref[lem:#1]{Lemma\,\ref*{lem:#1}}} 
\newcommand{\Prop}[1]{\hyperref[prop:#1]{Prop.~\ref*{prop:#1}}} 
\newcommand{\Cor}[1]{\hyperref[cor:#1]{Corollary~\ref*{cor:#1}}} 
\newcommand{\Conj}[1]{\hyperref[conj:#1]{Conjecture~\ref*{conj:#1}}} 
\newcommand{\Def}[1]{\hyperref[def:#1]{Definition~\ref*{def:#1}}} 
\newcommand{\Alg}[1]{\hyperref[alg:#1]{Alg.~\ref*{alg:#1}}} 
\newcommand{\Ex}[1]{\hyperref[ex:#1]{Ex.~\ref*{ex:#1}}} 
\newcommand{\Clm}[1]{\hyperref[clm:#1]{Claim~\ref*{clm:#1}}} 
\newcommand{\Step}[1]{\hyperref[step:#1]{Step~\ref*{step:#1}}} 
\newcommand{\new}[1]{#1}
\newcommand{\si}[1]{}
\newcommand{\main}[1]{#1}
\title{The impossibility of low rank representations for triangle-rich complex networks\footnote{Published in the Proceedings of National Academy of Sciences, Mar 2020~\cite{SeSh+20}.}}
\author{C. Seshadhri\footnote{C. Seshadhri acknowledges the support of NSF Awards CCF-1740850, CCF-1813165, and ARO Award W911NF1910294.} \\
University of California, Santa Cruz\\
{\tt sesh@ucsc.edu}
\and Aneesh Sharma\\
Google\\
{\tt aneesh.x.sharma@gmail.com}
\and Andrew Stolman\\
University of California, Santa Cruz\\
{\tt astolman@ucsc.edu}\\
\and Ashish Goel\\
Stanford University\\
{\tt ashishg@stanford.edu}
}
\begin{document}

\date{}
\maketitle
\begin{abstract}
  The study of complex networks is a significant development in modern
  science, and has enriched the social sciences, biology, physics, and
  computer science. Models and algorithms for such networks are
  pervasive in our society, and impact human behavior via social
  networks, search engines, and recommender systems to name a few. A
  widely used algorithmic technique for modeling such complex networks
  is to construct a low-dimensional Euclidean embedding of the
  vertices of the network, where proximity of vertices is interpreted
  as the likelihood of an edge. Contrary to the common view, we argue
  that such {\em graph embeddings} do \emph{not} capture salient
  properties of complex networks. The two properties we focus on are
  low degree and large clustering coefficients, which have been widely
  established to be empirically true for real-world networks. We
  mathematically prove that any embedding (that uses dot products to
  measure similarity) that can successfully create these two
  properties must have rank nearly linear in the number of
  vertices. Among other implications, this establishes that popular
  embedding techniques such as Singular Value Decomposition and
  node2vec fail to capture significant structural aspects of
  real-world complex networks. Furthermore, we empirically study a
  number of different embedding techniques based on dot product, and
  show that they all fail to capture the triangle structure.
\end{abstract}



\section{Introduction} \label{sec:intro}

Complex networks (or graphs) are a fundamental object of study
in modern science, across domains as diverse as the social sciences,
biology, physics, computer science, and engineering~\cite{WaFa94,Ne-survey,EaKl-book}. 
Designing good models for these networks is a crucial area of research,
and also affects society at large, given the role of online social networks
in modern human interaction~\cite{BaAl99,WaSt98,ChFa06}. Complex networks are massive, high-dimensional,
discrete objects, and are challenging to work with in a modeling
context. A common method of dealing with this challenge
is to construct a low-dimensional Euclidean embedding that tries
to capture the structure of the network (see~\cite{HaYiLe17} for a recent survey). Formally,
we think of the $n$ vertices as vectors $\vec{v}_1, \vec{v}_2, \ldots, \vec{v}_n \in \RR^d$,
where $d$ is typically constant (or very slowly growing in $n$).
The likelihood of an edge $(i,j)$ is proportional to (usually a non-negative
monotone function in) $\vec{v}_i\cdot \vec{v}_j$~\cite{AhSh+13,CaLuXu16}. This gives a graph
distribution that the observed network is assumed to be generated from.

The most important method to get such embeddings is the Singular Value Decomposition (SVD)
or other matrix factorizations of the adjacency
matrix~\cite{AhSh+13}. Recently, there has also been an explosion of
interest in using methods from deep neural networks to learn such graph embeddings~\cite{PeAlSk14,TaQu+15,CaLuXu16,grover2016node2vec}
(refer to~\cite{HaYiLe17} for more references). Regardless of the
specific method, a key goal in building an embedding is to 
keep the dimension $d$ small --- while trying to preserve the network
structure --- as the embeddings are used in a variety of downstream
modeling tasks such as graph
clustering, nearest neighbor search, and link
prediction~\cite{Twitter-embeddings}.  Yet a fundamental question remains unanswered: 
to what extent do such low dimensional embeddings actually capture
the structure of a complex network?

These models are often justified by treating the (few) dimensions as ``interests" of individuals,
and using similarity of interests (dot product) to form edges.
Contrary to the dominant view, we argue that low-dimensional embeddings are
\emph{not} good representations of complex networks. We demonstrate
mathematically and empirically that they lose local structure,
one of the hallmarks of complex networks. This runs counter
to the ubiquitous use of SVD in data analysis. The weaknesses
of SVD have been empirically observed in recommendation tasks~\cite{BaChGo10,GuGo+13,KlUgKl17},
and our result provides a mathematical validation of these findings.

Let us define the setting formally. Consider a set of vectors $\vec{v}_1, \vec{v}_2, \ldots, \vec{v}_n \in \mathbb{R}^d$
(denoted by the $d \times n$ matrix $V$) used to represent the $n$ vertices in a network. 
Let $\cG_V$ denote the following distribution of graphs over the vertex set $[n]$.
For each index pair $i,j$, independently insert (undirected) edge $(i,j)$ with probability
$\max(0,\min(\vec{v}_i \cdot \vec{v}_j, 1))$. (If $\vec{v}_i \cdot \vec{v}_j$ is negative,
$(i,j)$ is never inserted. If $\vec{v}_i \cdot \vec{v}_j \geq 1$, $(i,j)$ is always inserted.)
We will refer to this model as the ``embedding" of a graph $G$, and focus
on this formulation in our theoretical results. 
{This is a standard model in the literature, 
and subsumes the classic Stochastic Block Model~\cite{HoLa83}
and Random Dot Product Model~\cite{YoSc07,AtFi+18}. There are 
alternate models that use different functions of the dot product
for the edge probability, which are discussed in Section~\ref{sec:variants}.}
Matrix factorization is a popular method to obtain such a vector representation: the original adjacency matrix $A$
is ``factorized" as $V^TV$, where the columns of $V$ are $\vec{v}_1, \vec{v}_2, \ldots, \vec{v}_n$.

\begin{figure}
        \includegraphics[scale=0.5]{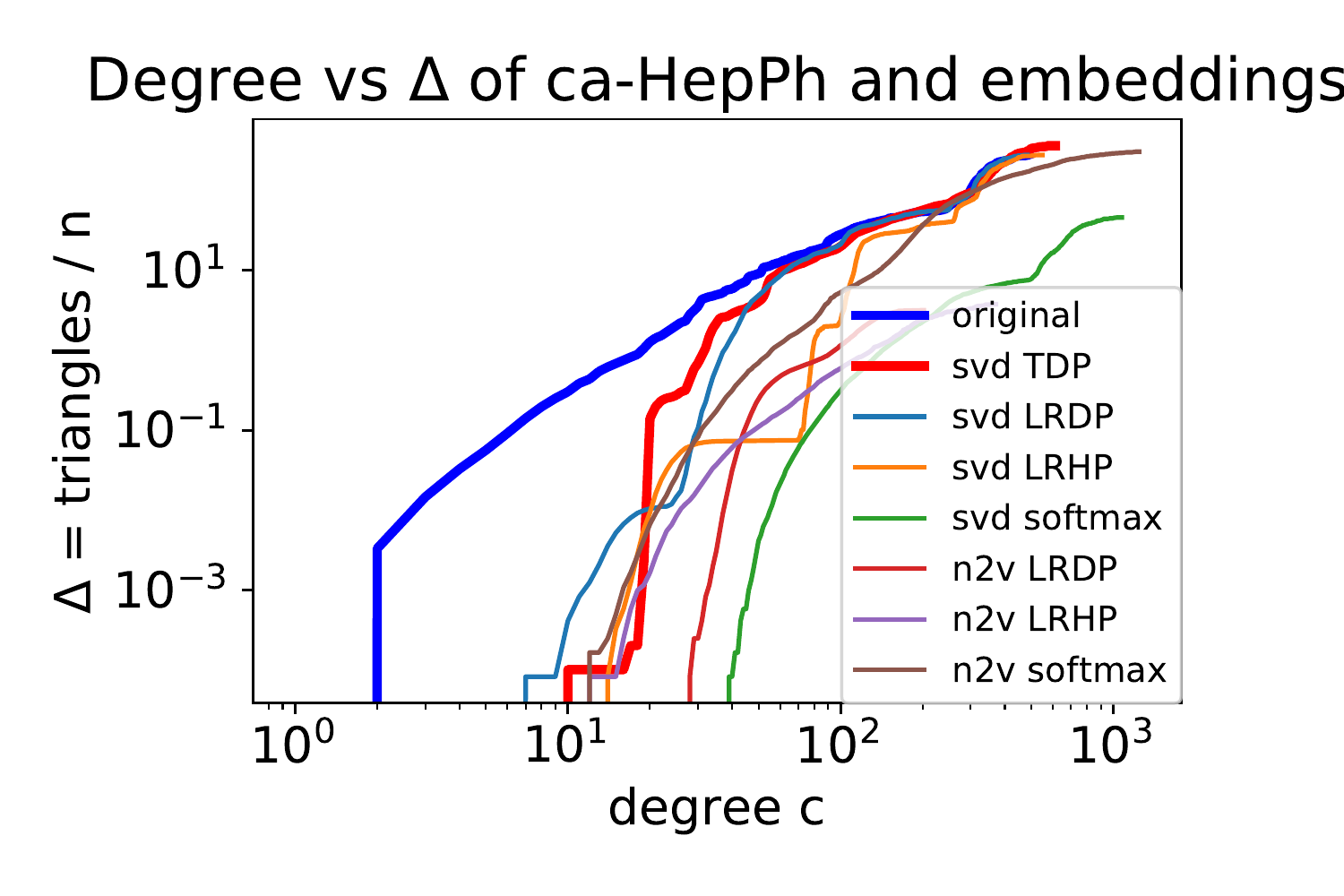}
				\captionof{figure}{\small Plots of degree $c$ vs $\Delta$: For a High Energy Physics
        coauthorship network,
        we plot $c$ versus the total number of triangles only involving vertices of
        degree at most $c$. We divide the latter by the total number of vertices $n$,
        so it corresponds to $\Delta$, as in Def.~\ref{def:foundation}. We plot these
        both for the original graph (in thick blue), and for a variety of embeddings (explained in Section~\ref{sec:variants}).
        For each embedding, we plot the maximum $\Delta$ in 
        a set of 100 samples from a 100-dimensional embedding. The embedding analyzed by
        our main theorem (TDP) is given in thick red. Observe how the embeddings generate
        graphs with very few triangles among low degree vertices. The gap in $\Delta$ for low degree is
        2-3 orders of magnitude. The other lines correspond to alternate embeddings,
        using the {\sc node2vec} vectors and/or different functions of the dot product.} \label{fig:intro-triangle-distro}
\end{figure}

Two hallmarks of real-world graphs are: (i) Sparsity: The average degree is typically constant with respect
to $n$, and (ii) Triangle density: there are many triangles incident to low degree vertices~\cite{WaSt98,SaCaWiZa10,SeKoPi11,DPKS12}.
The large number of triangles is considered a local manifestation of community structure. Triangle counts have a rich history
in the analysis and algorithmics of complex networks. Concretely, we measure these properties
simultaneously as follows. 

\begin{definition} \label{def:foundation} For parameters $c > 1$ and $\Delta > 0$, a graph $G$ with $n$ vertices
has a \emph{$(c,\Delta)$-triangle foundation} if there are at least $\Delta n$ triangles contained among vertices of degree at most $c$.
Formally, let $S_c$ be the set of vertices of degree at most $c$. Then, the number of triangles
in the graph induced by $S_c$ is at least $\Delta n$.
\end{definition}

Typically, we think of both $c$ and $\Delta$ as constants. 
{We emphasize that $n$ is the total number of vertices in $G$, not the number
of vertices in $S$ (as defined above).}
Refer to real-world graphs
in Table~\ref{tab:datasets}. In Figure~\ref{fig:intro-triangle-distro}, we plot the value
of $c$ vs $\Delta$. (Specifically, the $y$ axis is the number of triangles divided by $n$.)
This is obtained by simply counting the number of triangles contained in the set of vertices of degree at most $c$. 
Observe that for all graphs,
for $c \in [10,50]$, we get a value of $\Delta > 1$ (in many cases $\Delta > 10$). 

Our main result is that \emph{any} embedding of graphs that generates graphs with $(c,\Delta)$-triangle foundations,
with constant $c,\Delta$, must have near linear rank. This
contradicts the belief that low-dimensional embeddings capture the structure of real-world complex networks.

\begin{theorem} \label{thm:main} Fix $c > 4, \Delta > 0$. Suppose the expected number of triangles 
in $G \sim \cG_V$ that only involve vertices of expected degree $c$ is at least $\Delta n$.
Then, the rank of $V$ is at least $\min(1,\poly(\Delta/c)) n/\lg^2n$.
\end{theorem}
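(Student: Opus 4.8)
The plan is to reduce, via a dyadic bucketing, to a clean sub-configuration of row-normalized vectors, on which a single rank-sensitive inequality does all the work. Write $d=\rank(V)$, let $S$ be the set of vertices of expected degree at most $c$, and let $P$ be the $|S|\times|S|$ matrix with $P_{ij}=\max(0,\min(\vec v_i\cdot\vec v_j,1))$ for $i\ne j$ and $P_{ii}=0$, so the hypothesis reads $\tfrac16\mathrm{tr}(P^3)\ge\Delta n$. It is worth noting first what does \emph{not} work: bounding $\mathrm{tr}(P^3)\le\|P\|_{\mathrm{op}}\,\|P\|_F^2$ via the fact that each row of $P$ has $\ell_1$-norm at most $c$ only gives $\Delta=O(c^2)$ and is \emph{insensitive to the rank}, because clipping can make $P$ full rank even when $V$ has rank one. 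The rank must therefore enter through the Gram matrix $V^\top V$, and the one genuinely rank-sensitive estimate available is $\|A\|_F^2\ge\mathrm{tr}(A)^2/\rank(A)$ for positive semidefinite $A$; the whole argument is built around applying this to a principal submatrix of the row-normalized Gram matrix.

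\emph{Step 1 (pruning and bucketing).} First I would discard triples with a tiny edge: a triple with some edge-probability below $\varepsilon$ contributes $O(\varepsilon c^2 n)$ to the expected triangle count (use $\sum_i P_{ij}\le c$ for $i\in S$), so taking $\tau$ a small constant times $\Delta/c^2$ leaves expected mass $\ge\Delta n/2$ on triples all of whose edges have probability $\ge\tau$. Next, dyadically bucket the vertices by $\lceil\lg\|\vec v_i\|\rceil$ and by how far their dot products exceed $1$; a pigeonhole over these scales --- which, handled carefully, costs only a $\lg^2 n$ factor --- should isolate a set $S'$ of vertices whose norms lie in a single band $[\rho,O(\rho))$ together with $\Omega(\Delta n/\lg^2 n)$ triangles all of whose vertices lie in $S'$. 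Any edge $ij$ of such a triangle obeys $\tau\le P_{ij}\le\|\vec v_i\|\|\vec v_j\|=O(\rho^2)$, so $\rho=\Omega(\sqrt\tau)$ and hence $c':=c/\min(1,\rho^2)=\poly(c/\Delta)$. Finally set $\hat v_i:=\vec v_i/\|\vec v_i\|$; these are unit vectors in a space of dimension $d$.

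\emph{Step 2 (the rank inequality).} Let $\hat M$ be the Gram matrix of $\{\hat v_i\}_{i\in S'}$; it is positive semidefinite, has $\mathrm{tr}(\hat M)=|S'|$ and $\rank(\hat M)\le d$, so $\|\hat M\|_F^2\ge|S'|^2/d$. For the reverse bound I would write $\|\hat M\|_F^2=|S'|+\sum_{i\ne j}\hat M_{ij}^2$ and bound each row. Since every $j\in S'$ has $\|\vec v_j\|\ge\rho$, such a $j$ contributes at least $\min(1,\rho^2\max(0,\hat M_{ij}))$ to the expected degree of $i$ (which is $\le c$), so $\sum_{j\in S'}\max(0,\hat M_{ij})\le c/\min(1,\rho^2)=c'$, and as all entries are $\le 1$ in absolute value, $\sum_{j\ne i}\max(0,\hat M_{ij})^2\le c'$. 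For the negative entries, with $\Sigma:=\sum_{k\in S'}\hat v_k$ one has $\sum_{j\in S'}|\min(0,\hat M_{ij})|=\sum_{j\in S'}\max(0,\hat M_{ij})-\hat v_i\cdot\Sigma\le c'-\hat v_i\cdot\Sigma$, hence $\sum_{j\ne i}\min(0,\hat M_{ij})^2\le c'-\hat v_i\cdot\Sigma$; summing over $i$ the cross term telescopes and $\|\hat M\|_F^2\le(1+2c')|S'|-\|\Sigma\|^2\le(1+2c')|S'|$. Comparing the two bounds gives $|S'|\le(1+2c')\,d=\poly(c/\Delta)\cdot d$. To finish, observe that the graph on $S'$ joining pairs with $\hat v_i\cdot\hat v_j\ge\tau/O(\rho^2)$ contains every surviving triangle and (each such neighbour costing $\Omega(\tau)$ of expected degree) has maximum degree $D=O(c/\tau)=\poly(c/\Delta)$, so the number of surviving triangles is at most $\tfrac13|S'|\binom{D}{2}\le\poly(c/\Delta)\cdot d$; against $\Omega(\Delta n/\lg^2 n)$ this gives $d\ge\min(1,\poly(\Delta/c))\cdot n/\lg^2 n$.

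The hard part is Step 1: lining up the vertex-norm scales with the edge-probability scales, controlling dot products that are far above $1$, and getting all three vertices of each counted triangle into one norm band, all while paying only $\lg^2 n$ rather than the $\lg^3 n$ a crude pigeonhole costs. Step 2 is essentially automatic once the configuration is clean. As a sanity check, on the extremal example of $n/c$ disjoint $c$-cliques --- realized by equal vectors inside each clique and orthogonal directions across cliques --- every inequality used above is tight up to the $\lg^2 n$.
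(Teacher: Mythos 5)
Your Step 2 is essentially sound and parallels Part 3 of the paper's argument: the trace-versus-Frobenius inequality is \Lem{rank}, and your telescoping control of the negative entries via $\|\sum_k \hat v_k\|^2 \ge 0$ is \Lem{neg}. The genuine gap is Step 1, which you flag as ``the hard part'': the reduction to a single norm band is not merely delicate, it is false. The hypothesis of the theorem does not imply the existence of one dyadic norm band containing $\Omega(\Delta n/\lg^2 n)$ triangles with all three vertices in that band. Consider $n/3$ mutually orthogonal ``communities'', each consisting of two copies of a vector of length $b$ together with one aligned vector of length $1/b$, where $b$ is huge (say $b = n^{10}$). Every intra-community dot product is at least $1$, so each community forms a triangle with probability $1$, every vertex has expected degree $2 < c$, and there are $n/3$ triangles on low-degree vertices (so $\Delta = 1/3$); yet every triangle straddles two norm scales differing by the factor $b^2$, so no band --- indeed no window of $O(1)$ consecutive bands --- contains even one triangle. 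Nor does letting $S'$ be a union of a few bands rescue the plan: your Step 2 needs \emph{every} vector of $S'$ to have norm at least $\rho$ with $\rho = \poly(\sqrt{\Delta}/c)$, both for the row-sum bound $\sum_j \max(0,\hat M_{ij}) \le c/\min(1,\rho^2)$ and for the maximum-degree bound $D = O(c/\tau)$ in the threshold graph; in the example a constant fraction of the triangle mass sits on vertices of arbitrarily small norm, so $c' = c/\min(1,\rho^2)$ becomes polynomial in $n$ and the conclusion $|S'| \le (1+2c')d$ is vacuous.

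This heterogeneity of lengths is exactly what the paper's proof is organized around, using tools with no counterpart in your sketch. It first removes vectors of length at least $2\sqrt n$, via a packing lemma (\Lem{dot}: among any $4d$ unit vectors in $\RR^d$ some pair has dot product at least $1/4d$) combined with an independent-set argument (\Lem{long}) showing there are at most $5cd$ such vectors under the assumed-small rank; it removes vectors of length at most $n^{-2}$ since they contribute $O(1)$ expected triangles; only then is an $O(\lg n)$-fold bucketing by length available. Crucially, the paper never needs triangles confined to one bucket: the triangle hypothesis is used only to certify that some surviving large bucket consists of vectors of length at least $\sqrt{\Delta}/4c$ (\Clm{len}), and the rank lemma is then applied to the \emph{unnormalized} Gram matrix of the union of all large buckets, whose trace is dominated by that long bucket while its Frobenius norm is controlled by the degree bound, the observation that at most $2cn$ pairs have clipped (probability-$1$) dot products, and \Lem{neg}. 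If you wish to keep the normalized-vector formulation, you would need to replace Step 1 with an argument of this removal-plus-unnormalized-core type; as written, the single-band pigeonhole cannot be ``handled carefully'' into existence.
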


Equivalently, graphs generated from low-dimensional embeddings cannot contain many
triangles only on low-degree vertices. {We point out an important
implication of this theorem for Stochastic Block Models. In this model, each vertex is modeled
as a vector in $[0,1]^d$, where the $i$th entry indicates the likelihood
of being in the $i$th community. The probability of an edge is exactly
the dot product. In community detection applications, $d$ is thought of as a constant,
or at least much smaller than $n$. On the contrary, \Thm{main} implies
that $d$ must be $\Omega(n/\lg^2n)$ to accurately model the low-degree triangle behavior.}

\subsection{Empirical validation} \label{sec:emp}

We empirically validate the theory on a collection of complex
networks detailed in Table~\ref{tab:datasets}. For each real-world graph, we compute a 100-dimensional embedding through SVD
(basically, the top 100 singular vectors of the adjacency matrix).
We generate $100$ samples of graphs from these embeddings, and compute
their $c$ vs $\Delta$ plot. This is plotted with the true $c$ vs $\Delta$ plot.
(To account for statistical variation, we plot
the \emph{maximum} value of $\Delta$ observed in the samples, over all graphs. The variation
observed was negligible.)
Figure~\ref{fig:intro-triangle-distro} shows such a plot for a physics coauthorship network.
More results are given in Section~\ref{sec:emp-details}.
 
Note that this plot
is significantly off the mark at low degrees for the embedding. Around the lowest degree,
the value of $\Delta$ (for the graphs generated by the embedding) 
is 2-3 order of magnitude smaller than the original value. This demonstrates
that the local triangle structure is destroyed around low degree vertices.
Interestingly, the total number of triangles is preserved well, as shown
towards the right side of each plot. Thus, a nuanced view of the triangle
distribution, as given in \Def{foundation}, is required to see the shortcomings
of low dimensional embeddings.

\subsection{Alternate models} \label{sec:variants}
{We note that several other functions of dot product have been
  proposed in the literature, such as the softmax
  function~\cite{PeAlSk14,grover2016node2vec} and linear 
models of the dot product~\cite{HaYiLe17}. Theorem~\ref{thm:main} does not have
direct implications for such models, but our empirical validation holds for 
them as well. The embedding in Theorem~\ref{thm:main} uses the \emph{truncated
dot product} (TDP) function $max(0,\min(\vec{v}_i \cdot \vec{v}_j, 1))$ to model
edge probabilities. We construct other embeddings that compute edge probabilities
using machine learning models with the dot product and Hadamard product
as features. This subsumes linear models as given in~\cite{HaYiLe17}. Indeed, the TDP
can be smoothly approximated as a logistic function.
We also consider (scaled) softmax functions, as in~\cite{PeAlSk14}, and standard
machine learning models (LRDP, LRHP). 
(Details about these models are given in Section~\ref{sec:emp-details}\ref{sec:alt}.)

For each of these models (softmax, LRDP, LRHP), we perform the same experiment described above.
Figure~\ref{fig:intro-triangle-distro} also shows the plots for these other models.
Observe that \emph{none} of them capture the low-degree triangle structure,
and their $\Delta$ values are all 2-3 orders of magnitude lower than the original.  

In addition (to the extent possible), we compute vector embeddings
from a recent deep learning based method (node2vec~\cite{grover2016node2vec}).
We again use all the edge probability models discussed above, and perform
an identical experiment (in Figure~\ref{fig:intro-triangle-distro}, these
are denoted by ``n2v"). Again, we observe that the low-degree triangle behavior
is not captured by these deep learned embeddings.
}

\subsection{Broader context} \label{sec:context}
{The use of geometric embeddings for graph analysis has a rich history, arguably
going back to spectral clustering~\cite{Fi73}. In recent years, the Stochastic Block Model
has become quite popular in the statistics and algorithms community~\cite{HoLa83}.
and the Random Dot Product Graph model is a generalization of this notion
(refer to recent surveys~\cite{Ab18,AtFi+18}). As mentioned earlier, Theorem~\ref{thm:main}
brings into question the standard uses of these methods to model social networks.
The use of vectors to represent vertices
is sometimes referred to as \emph{latent space models}, where geometric proximity models
the likelihood of an edge. Although dot products are widely used, we
note that some classic latent space approaches use
Euclidean distance (as opposed to dot product) to model edge probabilities~\cite{HoRa+02},
and this may avoid the lower bound of Theorem~\ref{thm:main}.
Beyond graph analysis, the method of Latent Semantic Indexing (LSI) also falls
in the setting of Theorem~\ref{thm:main}, wherein we have a low-dimensional embedding
of ``objects" (like documents) and similarity is measured by dot product~\cite{lsi-wiki}. 
}

\section{High-level description of the proof} \label{sec:highlevel} 

In this section, we sketch the proof of Theorem~\ref{thm:main}.
{The sketch provides
sufficient detail for a reader who wants to understand the reasoning
behind our result, but is not concerned with technical details. 
We will make the simplifying assumption that all $v_i$s have the same
length $L$.  We note that this setting is interesting in
its own right, since it is often the case in practice that all vectors
are non-negative and normalized. In this case, we get a stronger rank
lower bound that is linear in $n$. Section~\ref{sec:vary} 
provides intuition on how we can remove this assumption. 
The full details of the proof are given in Section~\ref{sec:full-proof}.
}

First, we lower bound $L$. By Cauchy-Schwartz, $\vec{v}_i \cdot \vec{v}_j \leq L^2$. Let $X_{i,j}$
be the indicator random variable for the edge $(i,j)$ being present. Observe that all $X_{i,j}$s
are independent and $\EX[X_{i,j}] = \min(\vec{v}_i \cdot \vec{v}_j, 1) \leq L^2$.

The expected number of triangles in $G \sim \cG_V$ is:
\begin{eqnarray} 
& & \EX[\sum_{i\neq j \neq k} X_{i,j} X_{j,k} X_{i,k}] \\
& \leq & \sum_i \sum_{j,k} \EX[X_{j,k}] \EX[X_{i,j}] \EX[X_{i,k}] \\
& \leq & L^2 \sum_i \sum_{j,k} \EX[X_{i,j}] \EX[X_{i,k}] = L^2 \sum_i (\sum_j \EX[X_{i,j}])^2
\label{eq:tri1}
\end{eqnarray}
Note that $\sum_j \EX[X_{i,j}] = \EX[\sum_j X_{i,j}]$ is at most the degree of $i$,
{which is at most $c$. (Technically, the $X_{i,i}$ term creates a self-loop, so the correct
upper bound is $c+1$. For the sake of cleaner expressions, we omit the additive $+1$
in this sketch.)}
 
 The expected number of triangles is at least $\Delta n$. Plugging these bounds
in:
\begin{equation} 
\Delta n \leq L^2 c^2 n \Longrightarrow L \geq \sqrt{\Delta}/c \label{eq:tri2}
\end{equation}
Thus, the vectors have length at least $\sqrt{\Delta}/c$. Now, we lower bound the rank of $V$.
It will be convenient to deal with the Gram matrix $M = V^TV$, which has the same rank as $V$.
Observe that $M_{i,j} = \vec{v}_i \cdot \vec{v}_j \leq L^2$.
We will use the following lemma stated first by Swanapoel, but has appeared
in numerous forms previously~\cite{Sw14}. 

\begin{lemma} [Rank lemma] Consider any square matrix $M \in \RR^{n \times n}$. Then
$$ \rank(M) \ge \frac{| \sum_i M_{i,i}|^2}{\left( \sum_i \sum_j |M_{i,j}|^2 \right)} $$
\end{lemma}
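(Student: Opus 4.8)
The plan is to pass from the matrix to its eigenvalues and combine Cauchy--Schwarz with Schur's inequality. Write $r = \rank(M)$, and let $\lambda_1,\dots,\lambda_n \in \mathbb{C}$ be the eigenvalues of $M$ listed with algebraic multiplicity. Since the geometric multiplicity of the eigenvalue $0$ equals $\dim\ker M = n-r$ and is at most its algebraic multiplicity, at most $r$ of the $\lambda_k$ are nonzero; reorder so that $\lambda_{r+1}=\dots=\lambda_n=0$. The diagonal sum is the trace, so $\sum_i M_{i,i} = \mathrm{tr}(M) = \sum_{k=1}^n \lambda_k = \sum_{k=1}^r \lambda_k$.

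First I would apply Cauchy--Schwarz to this $r$-term sum: $\bigl|\sum_{k=1}^r \lambda_k\bigr|^2 \le r \sum_{k=1}^r |\lambda_k|^2 = r\sum_{k=1}^n |\lambda_k|^2$. It then remains to bound $\sum_k |\lambda_k|^2$ by $\sum_{i,j}|M_{i,j}|^2 = \|M\|_F^2$. For this I would invoke Schur triangularization: there is a unitary $U$ with $U^* M U = T$ upper triangular, and the diagonal of $T$ is exactly $\lambda_1,\dots,\lambda_n$. Since the Frobenius norm is unitarily invariant, $\sum_{i,j}|M_{i,j}|^2 = \|T\|_F^2 = \sum_k |\lambda_k|^2 + \sum_{i<j}|T_{i,j}|^2 \ge \sum_k |\lambda_k|^2$. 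Chaining the two estimates gives $|\sum_i M_{i,i}|^2 \le r \sum_{i,j}|M_{i,j}|^2$, which rearranges to the claim; the statement is trivial when the numerator is $0$, and when $\sum_{i,j}|M_{i,j}|^2=0$ we have $M=0$ and both sides vanish.

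The only substantive ingredient is Schur's inequality $\sum_k |\lambda_k|^2 \le \|M\|_F^2$ (equivalently, the triangularization fact); everything else is bookkeeping, so I do not expect a real obstacle. I would also remark that in the application of interest $M = V^{\top}V$ is symmetric positive semidefinite, hence unitarily diagonalizable with real nonnegative eigenvalues: then $T$ is already diagonal, Schur's inequality is an equality, and the argument collapses to a single application of Cauchy--Schwarz to $\mathrm{tr}(M)=\sum_k\lambda_k$ against its support of size $\le r$. Thus the general form is included only for completeness, and its sole extra cost over the symmetric case is the standard triangularization.
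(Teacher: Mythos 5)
Your proof is correct. Note, though, that the paper does not prove this lemma at all: it is quoted from Swanepoel's note (reference [Sw14]) and used as a black box, so there is no in-paper argument to compare against. Your route is the standard one, carried out in the stated generality of an arbitrary real square matrix: the counting of nonzero eigenvalues via rank--nullity, Cauchy--Schwarz on the at most $r$ nonzero eigenvalues, and Schur's inequality $\sum_k|\lambda_k|^2\le\|M\|_F^2$ via unitary triangularization are all sound, and the degenerate case $M=0$ is handled sensibly. Your closing remark is also the right one for this paper: both invocations of the lemma here are on Gram matrices $M=V^TV$ (of the vectors, or of the normalized vectors in Lemma 3.5), which are symmetric positive semidefinite, so the triangularization step is superfluous and the inequality is just Cauchy--Schwarz applied to $\mathrm{tr}(M)=\sum_k\lambda_k$ with at most $\rank(M)$ nonzero (real, nonnegative) terms, together with $\mathrm{tr}(M^2)=\sum_{i,j}M_{i,j}^2$. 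The only thing your write-up buys beyond what the paper needs is validity for non-normal matrices, at the cost of quoting Schur triangularization; either version is a complete and acceptable proof of the statement as written.
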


Note that $M_{i,i} = \vec{v}_i \cdot \vec{v}_i = L^2$, so the numerator
$|\sum_i M_{i,i}|^2 = n^2 L^4$. The denominator requires more work. We split 
it into two terms.
\begin{equation} \label{eq:small-denom}
\sum_{\substack{i,j\\\vec{v}_i \cdot \vec{v}_j \leq 1}} (\vec{v}_i \cdot \vec{v}_j)^2
\leq \sum_{\substack{i,j\\\vec{v}_i \cdot \vec{v}_j \leq 1}} \vec{v}_i \cdot \vec{v}_j \leq cn
\end{equation}
If for $i\neq j$, $\vec{v}_i \cdot \vec{v}_j > 1$, then $(i,j)$ is an edge with probability $1$.
Thus, there can be at most $(c-1)n$ such pairs. Overall, there are at most $cn$ pairs 
such that $\vec{v}_i \cdot \vec{v}_j > 1$. So, $\sum_{\substack{i,j\\ \vec{v}_i \cdot \vec{v}_j > 1}} (\vec{v}_i \cdot \vec{v}_j) 
\leq cnL^4$. Overall, we lower bound the denominator in the rank lemma by $cn(L^4+1)$.

We plug these bounds into the rank lemma. We use the fact that $f(x) = x/(1+x)$ is decreasing for positive $x$,
and that $L \geq \sqrt{\Delta}/c$.
\begin{eqnarray*}
\rank(M) \geq \frac{n^2 L^4}{cn(L^4 + 1)} \geq \frac{n}{c} \cdot \frac{\Delta^2/c^4}{\Delta^2/c^4 + 1}
= \frac{\Delta^2}{c(\Delta^2 + c^4)} \cdot n
\end{eqnarray*}

\subsection{Dealing with varying lengths} \label{sec:vary}

{The math behind \Eqn{tri2} still holds with
the right approximations. Intuitively, the existence of at least $\Delta n$ triangles 
implies that a sufficiently large number of vectors have length at least $\sqrt{\Delta}/c$.
On the other hand, these long vectors need to be ``sufficiently far
away" to ensure that the vertex degrees remain low. There
are many such long vectors, and they can can only be far away 
when their dimension/rank is sufficiently high.

The rank lemma is the main technical tool that formalizes this
intuition. When vectors are of varying length,
the primary obstacle is the presence of extremely long vectors that create triangles.
The numerator in the rank lemma sums $M_{i,i}$, which is the length of the vectors.
A small set of extremely long vectors could dominate the sum, increasing the numerator.
In that case, we do not get a meaningful rank bound.

But, because the vectors inhabit low-dimensional space, the long vectors from \emph{different}
clusters interact with each other. We prove a ``packing" lemma (\Lem{dot}) showing that 
there must be many large positive dot products among a set of extremely long vectors.
Thus, many of the corresponding vertices have large degree, and triangles incident
to these vertices do not contribute to low degree triangles.
Operationally, the main proof uses the packing lemma to show that there are few long vectors.
These can be removed without affecting the low degree structure. One can then perform
a binning (or ``rounding") of the lengths of the remaining vectors, to implement
the proof described in the above section.
}

\section{Proof of Theorem~\ref{thm:main}} \label{sec:full-proof}

For convenience, we restate the setting.
Consider a set of vectors $\vec{v}_1, \vec{v}_2, \ldots, \vec{v}_n$ $\in \RR^d$, that represent
the vertices of a social network. 
We will also use the matrix $V \in \RR^{d \times n}$ for these vectors,
where each column is one of the $\vec{v}_i$s.
Abusing notation, we will use $V$ to represent
both the set of vectors as well as the matrix. We will refer to the vertices by the index in $[n]$.

Let $\cG_V$ denote the following distribution of graphs over the vertex set $[n]$.
For each index pair $i,j$, independently insert (undirected) edge $(i,j)$ with probability
$\max(0,\min(\vec{v}_i \cdot \vec{v}_j, 1))$.

\subsection{The basic tools} \label{sec:tools}

We now state some results that will be used in the final proof.

\begin{lemma} \label{lem:rank} [Rank lemma~\cite{Sw14}] Consider any square matrix $A \in \RR^{n \times n}$. Then
$$ | \sum_i A_{i,i}|^2 \leq \rank(A) \left( \sum_i \sum_j |A_{i,j}|^2 \right) $$
\end{lemma}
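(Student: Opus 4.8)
The plan is to derive the inequality from a single application of the Cauchy--Schwarz inequality in the space $\RR^{n\times n}$ (or $\mathbb{C}^{n\times n}$) equipped with the Frobenius inner product $\langle B, C\rangle = \mathrm{tr}(B^*C)$, under which $\sum_i\sum_j |A_{i,j}|^2 = \|A\|_F^2$ and $\sum_i A_{i,i} = \mathrm{tr}(A)$. The one idea needed is that the trace of $A$ can be rewritten as the Frobenius inner product of $A$ with a matrix whose Frobenius norm is exactly $\sqrt{\rank(A)}$.

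Concretely, set $r = \rank(A)$ and let $P$ be the orthogonal projection onto the column space of $A$. Every column of $A$ lies in that space, so $PA = A$, and hence $\mathrm{tr}(A) = \mathrm{tr}(PA)$. Since $P$ is a self-adjoint idempotent, $\mathrm{tr}(PA) = \mathrm{tr}(P^*A) = \langle P, A\rangle$. Cauchy--Schwarz then gives $|\mathrm{tr}(A)|^2 = |\langle P, A\rangle|^2 \le \|P\|_F^2\,\|A\|_F^2$. Finally $\|P\|_F^2 = \mathrm{tr}(P^*P) = \mathrm{tr}(P^2) = \mathrm{tr}(P) = \dim \mathrm{col}(A) = r$, using $P^* = P$, $P^2 = P$, and the fact that the trace of an idempotent matrix equals its rank. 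Combining the pieces yields exactly $|\sum_i A_{i,i}|^2 \le \rank(A)\,\big(\sum_i\sum_j |A_{i,j}|^2\big)$.

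I do not expect a genuine obstacle here; the only steps that deserve a moment of care are choosing the projection onto the \emph{column} space of $A$ (so that $PA=A$; projecting onto the row space would instead give $AP=A$, which works equally well by symmetry of the claim), and resisting the temptation to argue via eigenvalues — for a general non-normal matrix $\mathrm{tr}(A)=\sum_k\lambda_k$ while $\|A\|_F^2=\sum_k\sigma_k^2$ involves the \emph{singular} values, so the two sides of the desired inequality are not directly comparable in eigenvalue terms, and the projection argument sidesteps this entirely.

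For completeness I would record two variants. Via the singular value decomposition $A=\sum_{k=1}^r \sigma_k u_kw_k^*$ with orthonormal $u_k$ and orthonormal $w_k$, one has $|\mathrm{tr}(A)| = |\sum_{k=1}^r\sigma_k\,w_k^*u_k| \le \sum_{k=1}^r\sigma_k$ since $|w_k^*u_k|\le 1$, and then the scalar Cauchy--Schwarz inequality $(\sum_{k=1}^r\sigma_k)^2 \le r\sum_{k=1}^r\sigma_k^2 = r\|A\|_F^2$ finishes the proof. And for the way the lemma is actually applied in this paper, where $A = M = V^TV$ is symmetric positive semidefinite, the argument is shorter still: $\mathrm{tr}(M)=\sum_k\lambda_k$ with at most $\rank(M)$ nonzero nonnegative eigenvalues and $\|M\|_F^2=\sum_k\lambda_k^2$, so Cauchy--Schwarz applied to the list of nonzero eigenvalues gives the bound directly.
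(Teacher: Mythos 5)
Your proof is correct. Note, however, that the paper itself gives no proof of this lemma: it is stated as a known result and attributed to Swanepoel~\cite{Sw14}, so there is no in-paper argument to compare against. Your projection argument ($PA=A$ for the orthogonal projection $P$ onto the column space, then Cauchy--Schwarz in the Frobenius inner product with $\|P\|_F^2=\rank(A)$) is a clean, fully general proof, and your two recorded variants are also sound: the SVD route handles arbitrary square matrices, and the eigenvalue route for symmetric PSD matrices is exactly the standard short argument (and is all the paper ever needs, since the lemma is only applied to Gram matrices $M=V^TV$ and to the Gram matrix of unit vectors in Lemma~3.6). Your caution about not comparing $\mathrm{tr}(A)=\sum_k\lambda_k$ with $\|A\|_F^2=\sum_k\sigma_k^2$ for general non-normal $A$ is well placed; the projection (or SVD) formulation is what makes the general statement go through.
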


\begin{lemma} \label{lem:neg} Consider a set of $s$ vectors $\vec{w}_1, \vec{w}_2, \ldots, \vec{w}_s$ in $\RR^d$.
$$ \sum_{\substack{(i,j) \in [s] \times [s] \\ \vec{w}_i \cdot \vec{w}_j < 0}} |\vec{w}_i \cdot \vec{w}_j|
\leq \sum_{\substack{(i,j) \in [s] \times [s] \\ \vec{w}_i \cdot \vec{w}_j > 0}} |\vec{w}_i \cdot \vec{w}_j| $$
\end{lemma}

\begin{proof}  Note that $(\sum_{i \leq s} \vec{w}_i)\cdot (\sum_{i \leq s} \vec{w}_i) \geq 0$.
Expand and rearrange to complete the proof.
\end{proof}

Recall that an independent set is a collection of vertices that induce no edge.

\begin{lemma} \label{lem:ind} Any graph with $h$ vertices and maximum degree $b$
has an independent set of at least $h/(b+1)$.
\end{lemma}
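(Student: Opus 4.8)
The plan is to prove this by a straightforward greedy deletion argument. I would process the graph in rounds, maintaining a working subgraph $H$ (initially the whole graph on $h$ vertices) and an independent set $I$ (initially empty). In each round, pick any vertex $v$ still present in $H$, add $v$ to $I$, and then delete from $H$ both $v$ and all of its neighbors in $H$. Since the maximum degree of the original graph (hence of $H$) is at most $b$, each round deletes at most $b+1$ vertices. The set $I$ is independent: when $v$ is added to $I$ all of its current neighbors are immediately removed, so no vertex added in a later round can be adjacent to $v$.

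For the counting step, observe that the process terminates exactly when $H$ becomes empty. If it runs for $t$ rounds, the total number of deleted vertices equals $h$ and is at most $t(b+1)$, so $t \geq h/(b+1)$. Since $|I| = t$, this gives $|I| \geq h/(b+1)$, as claimed. Note this yields the bound without any rounding, matching the statement exactly.

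There is essentially no obstacle here — it is a routine argument with no case analysis. If a non-algorithmic proof is preferred, the identical bound follows from the Caro--Wei inequality: take a uniformly random permutation of the vertices and let $I$ be the set of vertices that precede all of their neighbors in the permutation; then $\Pr[v \in I] = 1/(1 + \deg(v)) \geq 1/(b+1)$, so $\EX[|I|] \geq h/(b+1)$ while $I$ is always independent, so some independent set of size at least $h/(b+1)$ must exist. I would most likely present just the greedy version, since it is the shortest and is exactly at the level of detail needed for how this lemma is used later in the proof of Theorem~\ref{thm:main}.
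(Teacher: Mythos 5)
Your greedy deletion argument is correct and is essentially the same as the paper's proof, which states the identical remove-a-vertex-and-its-neighbors idea and merely formalizes it by induction on $h$ rather than by your direct counting of rounds. No gap; the Caro--Wei alternative is a fine but unnecessary extra.
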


\begin{proof}  Intuitively, one can incrementally build an independent set,
by adding one vertex to the set, and removing at most $b+1$ vertices from the graph.
This process can be done at least $h/(b+1)$ times.

Formally, we prove by induction on $h$. First we show
the base case. If $h \leq b+1$, then the statement is trivially true.
(There is always an independent set of size $1$.) For the induction step, let us construct
an independent set of the desired size. Pick an arbitrary vertex $x$ and add it to the independent set.
Remove $x$ and all of its neighbors. By the induction hypothesis, the remaining graph
has an independent set of size at least $(h - b - 1)/(b+1) = h/(b+1) - 1$.
\end{proof}

\begin{claim} \label{clm:var} Consider the distribution $\cG_{V}$.
Let $\degr_i$ denote the degree of vertex $i \in [n]$.
$\EX[\degr^2_i] \leq \EX[\degr_i] + \EX[\degr_i]^2$.
\end{claim}

\begin{proof} (of Claim~\ref{clm:var}) Fix any vertex $i \in [n]$. Observe that $\degr_i = \sum_{j \neq i} X_j$, where $X_j$ is the indicator random variable
for edge $(i,j)$ being present. Furthermore, all the $X_j$s are independent.
\begin{eqnarray*}
    \EX[D^2_i] & = & \EX[(\sum_{j \neq i} X_j)^2] = \EX[\sum_{j \neq i} X^2_j + 2\sum_{j\neq j'} X_j X_{j'}] \\
    & = & \EX[\sum_{j \neq i} X_j] + 2\sum_{j \neq j'} \EX[X_j] \EX[X_{j'}] \\
    & \leq & \EX[\degr_i] + (\sum_{j \neq i} \EX[X_j])^2 = \EX[\degr_i] + \EX[\degr_i]^2
\end{eqnarray*}
\vspace{-5pt}
\end{proof}

A key component of dealing with arbitrary length vectors is the following dot product lemma.
This is inspired by results of Alon~\cite{Al03} and Tao~\cite{Ta-post},
who get a stronger lower bound of $1/\sqrt{d}$ for \emph{absolute values} of the dot products.

\begin{lemma}  \label{lem:dot} Consider any set of $4d$ unit vectors $\vec{u}_1, \vec{u}_2, \ldots, \vec{u}_{4d}$
in $\RR^d$. There exists some $i \neq j$ such that $\vec{u}_i \cdot \vec{u}_j \geq 1/4d$.
\end{lemma}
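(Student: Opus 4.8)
The plan is to pass to the Gram matrix $M \in \RR^{4d \times 4d}$ with $M_{i,j} = \vec{u}_i \cdot \vec{u}_j$. This is a positive semidefinite matrix, and since it is the Gram matrix of vectors lying in $\RR^d$ it has $\rank(M) \le d$; moreover every diagonal entry equals $1$ and, by Cauchy--Schwarz, $|M_{i,j}| \le 1$ for all $i,j$. First I would apply the Rank Lemma (\Lem{rank}) to $M$: since $\sum_i M_{i,i} = 4d$, we get $(4d)^2 \le \rank(M)\cdot\sum_{i,j}|M_{i,j}|^2 \le d\sum_{i,j} M_{i,j}^2$, hence $\sum_{i,j} M_{i,j}^2 \ge 16d$. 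So the entries of $M$ carry a lot of $\ell_2$ mass; the diagonal accounts for only $4d$ of it, so the off-diagonal entries must contribute at least $12d$.

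The difficulty is that this mass could a priori sit entirely on very \emph{negative} off-diagonal entries, which would say nothing about a large \emph{positive} dot product. This is exactly where \Lem{neg} enters. Write $t = \max_{i \ne j} M_{i,j}$ and let $P = \sum_{i \ne j,\, M_{i,j} > 0} M_{i,j}$ be the total mass on the positive off-diagonal entries. Applying \Lem{neg} to $\vec{u}_1,\dots,\vec{u}_{4d}$ (the diagonal terms $M_{i,i}=1$ fall on the positive side), the sum of $|M_{i,j}|$ over negative entries is at most $4d + P$; since $|M_{i,j}| \le 1$ we also get $\sum_{M_{i,j}<0} M_{i,j}^2 \le 4d + P$. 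On the positive side, the diagonal contributes $4d$ and the positive off-diagonal part of $\sum M_{i,j}^2$ is at most $t\cdot P$. Combining with the Rank-Lemma bound, $16d \le 4d + tP + (4d + P) = 8d + (t+1)P$, so $(t+1)P \ge 8d$.

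Finally I would convert this into a bound on $t$ by bounding $P$ crudely: there are at most $4d(4d-1)$ ordered off-diagonal pairs, each with entry at most $t$, so $P \le t\cdot 4d(4d-1)$. Plugging in, $8d \le (t+1)\,t\,4d(4d-1)$, i.e.\ $t(t+1) \ge 2/(4d-1)$; since $t \le 1$ we have $t(t+1)\le 2t$, which forces $t \ge 1/(4d-1) \ge 1/(4d)$, as claimed. Note also that $(t+1)P \ge 8d > 0$ already forces $P>0$ and hence $t>0$, so such a pair $i\neq j$ genuinely exists.

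The only real subtlety is the middle step: everything hinges on using \Lem{neg} to turn ``large $\ell_2$ mass on off-diagonal entries'' (handed to us for free by the Rank Lemma) into ``large $\ell_1$ mass on \emph{positive} off-diagonal entries'', which in turn controls the maximum positive entry. The remaining work is just bookkeeping separating diagonal from off-diagonal contributions, plus a one-line counting bound on the number of pairs. I expect no other obstacles; the constants are not tight (the argument in fact gives $t \ge 1/(4d-1)$), but $4d$ vectors is the clean threshold at which the Rank-Lemma mass $16d$ exceeds twice the diagonal mass $8d$, which is precisely what makes $(t+1)P$ strictly positive.
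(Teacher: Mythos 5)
Your proof is correct and uses essentially the same ingredients as the paper's: the Rank Lemma applied to the $4d\times 4d$ Gram matrix to get $\sum_{i,j}M_{i,j}^2\ge 16d$, Lemma~\ref{lem:neg} to bound the negative-entry mass by the diagonal plus positive off-diagonal mass, and a counting bound on the off-diagonal pairs. The only difference is presentational — the paper argues by contradiction (bounding $\sum_{\cN}|M_{i,j}|^2$ above by $8d$ and below by $11d$), whereas you run the same inequalities directly in terms of $t=\max_{i\ne j}M_{i,j}$, which incidentally yields the marginally sharper bound $t\ge 1/(4d-1)$.
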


\begin{proof} (of \Lem{dot})  We prove by contradiction, so assume $\forall i \neq j, \vec{u}_i \cdot \vec{u}_j < 1/4d$.
We partition the set $[4d] \times [4d]$ into 
$\cN = \{(i,j) | \vec{u}_i \cdot \vec{u}_j < 0\}$ and $\cP = \{(i,j) | \vec{u}_i \cdot \vec{u}_j \geq 0\}$.
The proof goes by providing (inconsistent) upper and lower bounds for $\sum_{(i,j) \in \cN} |\vec{u}_i \cdot \vec{u}_j|^2$.
First, we upper bound $\sum_{(i,j) \in \cN} |\vec{u}_i \cdot \vec{u}_j|^2$ by:
\begin{eqnarray}
    & \leq &  \sum_{(i,j) \in \cN} |\vec{u}_i \cdot \vec{u}_j| \ \ \ \ \ \ \ \textrm{($\vec{u}_i$s are unit vectors)} \nonumber \\
    & \leq & \sum_{i \leq 4d} \|\vec{u}_i\|^2_2 + \sum_{\substack{1 \leq i \neq j \leq 4d\\ (i,j) \in \cP}} |\vec{u}_i \cdot \vec{u}_j| \ \ \ \ \ \ \ \textrm{(\Lem{neg})} \nonumber \\
    & < & 4d + 16d^2/4d = 8d \ \ \ \ \textrm{(since $\vec{u}_i \cdot \vec{u}_j < 1/4d$)} \label{eq:dot-upper}
\end{eqnarray}
For the lower bound, we invoke the rank bound of \Lem{rank} on the $4d \times 4d$ Gram matrix $M$ of $\vec{u}_1, \ldots, \vec{u}_{4d}$.
Note that $\rank(M) \leq d$, $M_{i,i} = 1$, and $M_{i,j} = \vec{u}_i \cdot \vec{u}_j$.
By \Lem{rank}, $\sum_{(i,j) \in [4d] \times [4d]} |\vec{u}_i \cdot \vec{u}_j|^2 \geq (4d)^2/d = 16 d$. 
We bound
\begin{eqnarray}
\sum_{(i,j) \in \cP} |\vec{u}_i \cdot \vec{u}_j|^2 & = & \sum_{i \leq 4d} \|\vec{u}_i\|^2_2 + \sum_{(i,j) \in \cP, i \neq j} |\vec{u}_i \cdot \vec{u}_j|^2 \nonumber \\
& \leq & 4d + (4d)^2/(4d)^2 \leq 5d
\end{eqnarray}
Thus, $\sum_{(i,j) \in \cN} |\vec{u}_i \cdot \vec{u}_j|^2 \geq 16d - 5d = 11d$. This contradicts the bound of \Eqn{dot-upper}.

\end{proof} 

\subsection{The main argument} \label{sec:proof}

We prove by contradiction. We assume that the expected number
of triangles contained in the set of vertices of expected degree at most $c$,
is at least $\Delta n$. {We remind the reader that $n$
is the total number of vertices.} For convenience, we simply remove the vectors
corresponding to vertices with expected degree at least $c$.
Let $\hat{V}$ be the matrix of the remaining vectors, and we focus
on $\cG_{\hat{V}}$. The expected number of triangles in $G \sim \cG_{\hat{V}}$
is at least $\Delta n$.

The overall proof can be thought of in three parts.

{\em Part 1, remove extremely long vectors:} Our final aim is to use the rank lemma (Lemma~\ref{lem:rank})
to lower bound the rank of $V$. The first problem we encounter is that
extremely long vectors can dominate the expressions in the rank lemma,
and we do not get useful bounds. We show that the number of such long vectors
is extremely small, and they can removed without affecting too many triangles.
In addition, we can also remove extremely small vectors, since they cannot participate
in many triangles.

{\em Part 2, find a ``core" of sufficiently long vectors that contains enough triangles:} The previous step
gets a ``cleaned" set of vectors. Now, we bucket these vectors by length. We show
that there is a large bucket, with vectors that are sufficiently long, such that
there are enough triangle contained in this bucket.

{\em Part 3, apply the rank lemma to the ``core":} We now focus on this core of vectors,
    where the rank lemma can be applied. 
    \main{At this stage, the mathematics shown
    in Section~\ref{sec:highlevel} can be carried out almost directly.}

Now for the formal proof.
For the sake of contradiction, we assume that $d = \rank(\hat{V}) < \alpha (\Delta^4/c^9) \cdot n/\lg^2n$
(for some sufficiently small constant $\alpha > 0$).

{\bf Part 1: Removing extremely long (and extremely short) vectors}

We begin by showing that there cannot be many long vectors in $\hat{V}$.

\begin{lemma} \label{lem:long} There are at most $5cd$ vectors of length at least $2\sqrt{n}$.
\end{lemma}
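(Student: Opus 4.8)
The plan is to argue by contradiction. Suppose $\hat V$ contains a set $W$ of more than $5cd$ vectors, each of length at least $2\sqrt n$. The key observation is that two such long vectors whose \emph{normalized} dot product is at least $1/(4d)$ are forced to be joined by an edge in $\cG_{\hat V}$ with probability $1$: writing $\vec u_i = \vec w_i/\|\vec w_i\|$ for $\vec w_i \in W$, we have
$$\vec w_i \cdot \vec w_j \;=\; \|\vec w_i\|\,\|\vec w_j\|\,(\vec u_i\cdot \vec u_j) \;\geq\; 4n \cdot \tfrac{1}{4d} \;=\; n/d \;\geq\; 1,$$
using $d \leq n$ (which holds since $d = \rank(\hat V) < \alpha(\Delta^4/c^9)\,n/\lg^2 n$). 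So a long vector that has many long, ``near-aligned'' partners is forced to have large degree, which is impossible inside $\hat V$.

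To make this precise I would form the auxiliary graph $H$ on vertex set $W$, placing an edge between $i$ and $j$ exactly when $\vec u_i \cdot \vec u_j \geq 1/(4d)$. If some vertex of $H$ had degree more than $c$, then by the displayed inequality the corresponding vertex of $\cG_{\hat V}$ would be incident to more than $c$ edges present with probability $1$, so its expected degree would exceed $c$ --- contradicting the fact that every vertex retained in $\hat V$ has expected degree at most $c$ (passing from $\cG_V$ to $\cG_{\hat V}$ only decreases degrees). Hence the maximum degree of $H$ is at most $c$.

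Now I invoke the greedy independent-set bound \Lem{ind} with $h = |W|$ and $b = c$: the graph $H$ has an independent set $I$ with $|I| \geq |W|/(c+1) > 5cd/(c+1)$. Since $c > 4$ we have $5c/(c+1) > 4$, hence $|I| \geq 4d$. But by construction $I$ is a set of at least $4d$ unit vectors in $\RR^d$ with all pairwise dot products strictly below $1/(4d)$, which directly contradicts the dot-product lemma \Lem{dot}. This contradiction proves that at most $5cd$ vectors of $\hat V$ have length at least $2\sqrt n$.

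I do not expect a genuine obstacle here: the whole argument is a clean combination of the packing statement \Lem{dot} with the greedy independent-set bound, plus the elementary estimate $n/d \geq 1$ and the numerics $5c/(c+1) > 4 \iff c > 4$. The only points that need care in the write-up are the case split (a high-degree vertex of $H$ versus the existence of a large independent set) and checking that ``expected degree at most $c$'' is indeed the property preserved for the vertices of $\hat V$.
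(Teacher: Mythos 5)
Your proposal is correct and follows essentially the same argument as the paper: build the auxiliary graph $H$ on the long vectors, bound its maximum degree by $c$ via probability-$1$ edges, extract an independent set of size at least $4d$ using \Lem{ind}, and contradict \Lem{dot}. The only cosmetic difference is that you define $H$ with the threshold $1/(4d)$ (using $d \leq n$ to force the edge) while the paper uses $1/(4n)$ and invokes $1/(4d) \geq 1/(4n)$ at the final step; both are equivalent.
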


\begin{proof} Let $\cL$ be the set of ``long" vectors, those with length at least $2\sqrt{n}$.
Let us prove by contradiction, so assume there are more than $5cd$ long vectors.
Consider a graph $H = (\cL,E)$, where vectors $\vec{v_i},\vec{v_j} \in \cL$ ($i \neq j$) are connected
by an edge if $\frac{\vec{v_i}}{\|\vec{v}_i\|_2} \cdot \frac{\vec{v_j}}{\|\vec{v}_j\|_2} \geq 1/4n$.
We choose the $1/4n$ bound to ensure that all edges in $H$ are edges in $G$.

Formally, for any edge $(i,j)$ in $H$, $\vec{v_i}\cdot\vec{v_j} \geq \|\vec{v}_i\|_2 \|\vec{v}_j\|_2/4n 
\geq (2\sqrt{n})^2/4n = 1$. So $(i,j)$ is an edge with probability $1$
in $G \sim \cG_V$. The degree of any vertex
in $H$ is at most $c$. By \Lem{ind}, $H$ contains an independent set $I$ of size at least $5cd/(c+1) \geq 4d$.
Consider an arbitrary sequence of $4d$ (normalized) vectors in $I$  $\vec{u}_1, \ldots, \vec{u}_{4d}$.
Applying \Lem{dot} to this sequence, we deduce the existence of $(i,j)$ in $I$ ($i \neq j$) such that 
$\frac{\vec{v_i}}{\|\vec{v}_i\|_2} \cdot \frac{\vec{v_j}}{\|\vec{v}_j\|_2} \geq 1/4d \geq 1/4n$.
Then, the edge $(i,j)$ should be present in $H$, contradicting the fact that $I$ is an independent set.
\end{proof}

Denote by $V'$ the set of all vectors in $\hat{V}$ with length in the range $[n^{-2}, 2\sqrt{n}]$. 

\begin{claim} \label{clm:prune} The expected degree of every vertex in $G \sim \cG_{V'}$ is 
at most $c$, and the expected number of triangles in $G$ is at least $\Delta n/2$.
\end{claim}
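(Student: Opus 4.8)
The plan is to handle the two assertions separately. The degree bound is pure monotonicity: $V'\subseteq\hat V$, and the edge probability $\EX[X_{ij}]=\max(0,\min(\vec v_i\cdot\vec v_j,1))$ depends only on the pair $(\vec v_i,\vec v_j)$, so for $i\in V'$ its expected degree in $\cG_{V'}$ is $\sum_{j\in V'\setminus\{i\}}\EX[X_{ij}]\le\sum_{j\in\hat V\setminus\{i\}}\EX[X_{ij}]$, the expected degree of $i$ in $\cG_{\hat V}$, which is at most $c$ by the construction of $\hat V$. For the triangle count, I would bound the expected number of \emph{lost} triangles --- triangles of $\cG_{\hat V}$ that use a vector of length outside $[n^{-2},2\sqrt n]$ --- and show it is at most $\Delta n/2$; since $\cG_{\hat V}$ has at least $\Delta n$ triangles in expectation, $\cG_{V'}$ then retains at least $\Delta n/2$. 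Let $A$ be the ``long'' vectors (length $\ge 2\sqrt n$) and $B$ the ``short'' ones (length $<n^{-2}$). Every lost triangle either contains a vertex of $A$ (call this family $L_1$) or contains a vertex of $B$ and no vertex of $A$ (family $L_2$), so it suffices to bound $\EX[|L_1|]$ and $\EX[|L_2|]$ by $\Delta n/4$ each.

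For $L_1$: \Lem{long} gives $|A|\le 5cd$, and for any vertex $i\in\hat V$ the independence of the edge indicators together with $X_{jk}\le 1$ yields $\EX[t_i]\le(\sum_{j\ne i}\EX[X_{ij}])^2=\EX[\degr_i]^2\le c^2$, where $t_i$ denotes the number of triangles of $G$ through $i$ (this is essentially \Clm{var}). Hence $\EX[|L_1|]\le\sum_{i\in A}\EX[t_i]\le 5c^3d$, and substituting the standing assumption $d<\alpha(\Delta^4/c^9)n/\lg^2 n$ bounds this by $5\alpha\Delta^4c^{-6}\,n/\lg^2 n$, which is at most $\Delta n/4$ for $n$ larger than a constant depending only on $c$ and $\Delta$ (as we may assume, since otherwise the standing bound already contradicts $\rank(\hat V)\ge 1$). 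For $L_2$: a triangle $\{i,j,k\}$ with $i\in B$ and $i,j,k\notin A$ has $\|\vec v_i\|<n^{-2}$ and $\|\vec v_j\|,\|\vec v_k\|\le 2\sqrt n$, so Cauchy-Schwarz gives $\EX[X_{ij}],\EX[X_{ik}]<2n^{-3/2}$ while $\EX[X_{jk}]\le 1$; each such triangle contributes less than $4n^{-3}$ to the expectation, and since fewer than $n^{3}$ triples qualify, $\EX[|L_2|]<4\le\Delta n/4$ for all large $n$. Adding the two estimates gives expected lost triangles at most $\Delta n/2$, which finishes the argument.

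The one subtlety --- and where I expect the real work to be --- is in separating $L_2$ from $L_1$ cleanly. Vectors in $\hat V$ can be arbitrarily long, so a short vector may still have dot product close to $1$ with a gigantic vector; consequently a short vector's expected degree can be as large as $c$, and a naive ``every short vertex lies in $o(1)$ triangles'' claim is false --- one cannot afford $\Theta(n)$ short vertices each carrying $\Theta(c^2)$ triangles. The resolution is exactly the decomposition above: triangles through a long vector are controlled because there are only $O(cd)$ such vectors (the packing statement \Lem{long}), so the crude $\EX[\degr_i]^2\le c^2$ bound suffices for them; only after setting those aside does every edge from a short vertex to a surviving (non-long) vertex genuinely have probability $O(n^{-3/2})$, which is what makes $\EX[|L_2|]$ negligible.
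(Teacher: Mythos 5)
Your proposal is correct and follows essentially the same route as the paper: the same split of lost triangles into those touching a long vector (bounded via \Lem{long} and the squared-degree bound of \Clm{var}, then the standing assumption on $d$) and those touching a short but no long vector (bounded by Cauchy--Schwarz to $O(n^{-3})$ per triple), plus the same monotonicity argument for the degree bound. The only differences are bookkeeping: the paper budgets the two losses as $\Delta n/\lg^2 n$ and $4$ rather than $\Delta n/4$ each, and uses $\EX[\degr_i^2]\le 2c^2$ where you use $\EX[\degr_i]^2\le c^2$, which are interchangeable here.
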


\begin{proof} Since removal of vectors can only decrease the 
degree, the expected degree of every vertex in $\cG_{V'}$ is naturally at most $c$.
It remains to bound the expected number of triangles in $G \sim \cG_{V'}$.
By removing vectors in $V\setminus V'$, we potentially lose some triangles. Let us
categorize them into those that involve at least one ``long" vector (length $\geq 2\sqrt{n}$)
and those that involve at least one ``short" vector (length $\leq n^{-2}$) but no long vector.

We start with the first type.
By \Lem{long}, there are at most $5cd$ long vectors. For any vertex, the expected number
of triangles incident to that vertex is at most the expected square of the degree.
By \Clm{var}, the expected degree squares is at most $c + c^2 \leq 2c^2$.  Thus,
the expected total number of triangles of the first type is at most $5cd\times 2c^2 \leq \Delta n/\lg^2n$.

Consider any triple of vectors $(\vec{u},\vec{v},\vec{w})$ where $\vec{u}$ is short
and neither of the others are long. The probability that this triple forms
a triangle is at most 
\begin{eqnarray*}
& & \min(\vec{u}\cdot\vec{v},1) \cdot \min(\vec{u}\cdot\vec{w},1) \\
& \leq & \min(\|\vec{u}\|_2\|\vec{v}\|_2,1) \cdot \min(\|\vec{u}\|_2 \|\vec{w}\|_2,1) \\
& \leq & (n^{-2} \cdot 2\sqrt{n})^2 \leq 4n^{-3} 
\end{eqnarray*}
Summing over all such triples, the expected number of such triangles is at most $4$.

Thus, the expected number of triangles in $G \sim \cG_{V'}$ is at least $\Delta n - \Delta n/\lg^2n - 4 \geq \Delta n/2$.
\end{proof}

{\bf Part 2: Finding core of sufficiently long vectors with enough triangles}

For any integer $r$, let $V_r$ be the set of vectors $\{\vec{v} \in V' | \|\vec{v}\|_2 \in [2^r, 2^{r+1})\}$.
Observe that the $V_r$s form a partition of $V'$. Since all lengths in $V'$
are in the range $[n^{-2},2\sqrt{n}]$, there are at most $3\lg n$ non-empty $V_r$s.
Let $R$ be the set of indices $r$ such that $|V_r| \geq (\Delta/60c^2)(n/\lg n)$.
Furthermore, let $V''$ be $\bigcup_{r\in R} V_r$.

\begin{claim} \label{clm:prune2} The expected number of triangles in $G \sim \cG_{V''}$ is at least $\Delta n/8$.
\end{claim}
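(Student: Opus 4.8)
The plan is to prove the claim by a union bound over the discarded length buckets, showing that passing from $\cG_{V'}$ to $\cG_{V''}$ destroys only a small fraction of the $\Delta n/2$ triangles guaranteed by \Clm{prune}. The key observation is that every triangle of $G \sim \cG_{V'}$ whose three vertices are not all in $V''$ must have at least one vertex in some bucket $V_r$ with $r \notin R$, i.e.\ a bucket of size $|V_r| < (\Delta/60c^2)(n/\lg n)$. Since removing vectors only removes triangles from any fixed realization, it therefore suffices to upper bound $\sum_{r\notin R}\sum_{i \in V_r} T_i$, where $T_i$ is the number of triangles of $G$ containing vertex $i$; this quantity counts every ``lost'' triangle at least once.

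Next I would bound $\EX[T_i]$ for a single vertex $i \in V'$. The number of triangles through $i$ is at most $\degr_i^2$, and because every vertex of $V'$ has expected degree at most $c$, \Clm{var} gives $\EX[\degr_i^2] \le c + c^2 \le 2c^2$ --- the same per-vertex bound already used in the proof of \Clm{prune}. Hence for a single small bucket, $\EX\big[\sum_{i\in V_r} T_i\big] \le |V_r|\cdot 2c^2 < (\Delta/60c^2)(n/\lg n)\cdot 2c^2 = (\Delta/30)(n/\lg n)$.

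Finally I would combine this with the fact (noted just before the claim) that $V'$ partitions into at most $3\lg n$ non-empty buckets, so there are at most $3\lg n$ small buckets. Summing over them, the expected number of triangles of $\cG_{V'}$ incident to a small bucket is at most $3\lg n \cdot (\Delta/30)(n/\lg n) = \Delta n/10$. Subtracting from the bound of \Clm{prune} gives expected triangles in $\cG_{V''}$ at least $\Delta n/2 - \Delta n/10 = 2\Delta n/5 \ge \Delta n/8$, which is the claim.

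I do not expect a genuine obstacle here: the argument is essentially a one-line union bound once the per-vertex triangle count is controlled. The only points needing a little care are the bookkeeping --- a triangle with two vertices in small buckets is over-counted in the double sum, which is harmless for an upper bound --- and recognizing that the constant $60c^2$ in the definition of $R$ was chosen precisely so that the final arithmetic closes with room to spare. The single quantitative input, $\EX[\degr_i^2] \le 2c^2$ via \Clm{var}, is already in hand.
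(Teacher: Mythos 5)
Your proposal is correct and follows essentially the same route as the paper: bound the number of vectors in the small buckets ($\le 3\lg n$ buckets, each of size $< (\Delta/60c^2)(n/\lg n)$), use \Clm{var} to bound the expected number of triangles through each such vertex by $\EX[\degr_i^2]\le 2c^2$, and subtract the resulting $\Delta n/10$ from the $\Delta n/2$ of \Clm{prune}. The only difference is cosmetic (you sum per bucket, the paper first totals the vectors across all small buckets), and your arithmetic matches the paper's.
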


\begin{proof} The total number of vectors in $\bigcup_{r \notin R} V_r$ is at most 
$3\lg n \times (\Delta/60c^2) (n/\lg n)$  $\leq (\Delta/20c^2) n$. By \Clm{var} and linearity of expectation,
the expected sum of squares of degrees of all vectors in $\bigcup_{r \notin R} V_r$
is at most $(d+c^2) \times (\Delta/20c^2) n$ $\leq \Delta n/10$. Since the expected
number of triangles in $G \sim \cG_{V'}$ is at least $\Delta n/2$
(\Clm{prune}) and the expected number of triangles
incident to vectors in $V' \setminus V''$ is at most $\Delta n/10$,
the expected number of triangles in $G \sim \cG_{V''}$ is at least $\Delta n/2 - \Delta n/10 \geq \Delta n/8$.
\end{proof}

We now come to an important claim. Because the expected number of triangles in $G \sim \cG_{V''}$ is large,
we can prove that $V''$ must contain vectors of at least constant length.

\begin{claim} \label{clm:len} $\max_{r \in R} 2^r \geq \sqrt{\Delta}/4c$.
\end{claim}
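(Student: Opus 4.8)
The plan is to redo the length lower bound from the proof sketch --- the derivation of \Eqn{tri2} --- but now for the ensemble $\cG_{V''}$, with the triangle count supplied by \Clm{prune2}. Write $L^\ast := \max_{r \in R} 2^r$. Since $V'' = \bigcup_{r\in R} V_r$ and every vector in $V_r$ has length below $2^{r+1}$, every vector of $V''$ has length below $2L^\ast$; and by \Clm{prune}, together with the fact that deleting vectors only lowers degrees, every vertex of $G \sim \cG_{V''}$ has expected degree at most $c$.

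Next I would upper bound the expected triangle count of $\cG_{V''}$, exactly as in the sketch. Let $X_{ij}$ be the edge indicators, which are mutually independent over distinct index pairs. For any pairwise distinct $i,j,k \in V''$ we have $\EX[X_{ij}X_{jk}X_{ik}] = \EX[X_{ij}]\EX[X_{jk}]\EX[X_{ik}]$, and since $\EX[X_{jk}] \le \|\vec v_j\|\,\|\vec v_k\| < 4(L^\ast)^2$ by Cauchy--Schwarz (this holds whatever the sign of $\vec v_j\cdot\vec v_k$), we may bound that single factor and then sum freely over $j$ and $k$:
\begin{equation*}
\EX\Big[\,\sum_{i\ne j\ne k} X_{ij}X_{jk}X_{ik}\Big] \;\le\; 4(L^\ast)^2 \sum_{i \in V''} \Big(\sum_{j} \EX[X_{ij}]\Big)^2 \;\le\; 4(L^\ast)^2 c^2 n ,
\end{equation*}
using $\sum_j \EX[X_{ij}] = \EX[D_i] \le c$ and $|V''| \le n$; the last inequality is exactly where it matters that $n$ is the number of vertices of the whole graph, not of $V''$.

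Finally I would compare this with \Clm{prune2}. The left-hand side above sums over ordered triples of (pairwise) distinct vertices forming a triangle, so it is at least $6$ times the expected number of (unordered) triangles of $\cG_{V''}$, hence at least $6\cdot\Delta n/8$. Therefore $\tfrac34 \Delta n \le 4(L^\ast)^2 c^2 n$, i.e. $L^\ast \ge \sqrt{3\Delta}/(4c) \ge \sqrt{\Delta}/(4c)$, which is the claim. I do not expect a real obstacle: this is just the varying-length version of \Eqn{tri2}, and the delicate work --- forcing the triangle mass onto vectors that are neither too long nor too short, and then onto a single dyadic band $V''$ --- has already been done in Parts 1 and 2. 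The only points to keep straight are invoking \Clm{prune2} (rather than the raw hypothesis), replacing the uniform length $L$ of the sketch by the uniform upper bound $2L^\ast$ on lengths in $V''$, and the bookkeeping $\EX[D_i]\le c$ and $|V''|\le n$.
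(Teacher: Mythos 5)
Your proof is correct and takes essentially the same route as the paper's: upper-bound the expected triangle count of $\cG_{V''}$ by a Cauchy--Schwarz estimate on the uniformly bounded vector lengths times a per-vertex degree-squared bound, and play this off against the $\Delta n/8$ lower bound of \Clm{prune2}. The differences are only cosmetic---you argue directly in terms of $L^\ast$ rather than by contradiction, use $\EX[D_i]^2 \le c^2$ where the paper invokes \Clm{var} to get $\EX[D_i^2]\le 2c^2$, and track the ordered-triple factor of $6$ explicitly, which makes the constant come out with room to spare.
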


\begin{proof} Suppose not. Then every vector in $V''$ has length at most $\sqrt{\Delta}/4c$.
By Cauchy-Schwartz, for every pair $\vec{u}, \vec{v} \in V''$, $\vec{u}\cdot\vec{v} \leq \Delta/16c^2$.
Let $I$ denote the set of vector indices in $V''$ (this corresponds to the vertices in $G \sim \cG_{V''}$).
For any two vertices $i \neq j \in I$, let $X_{i,j}$ be the indicator random variable for 
edge $(i,j)$ being present.
The expected number of triangles incident to vertex $i$ in $G \sim \cG_{V''}$
is 
$$ \EX[\sum_{j\neq k \in I} X_{i,j}X_{i,k}X_{j,k}]  =  \sum_{j\neq k \in I} \EX[X_{i,j} X_{i,k}] \EX[X_{j,k}] $$
Observe that $\EX[X_{j,k}]$ is at most $\vec{v_j} \cdot \vec{v_k} \leq \Delta/16c^2$.
Furthermore, $\sum_{j \neq k \in I} \EX[X_{i,j} X_{i,k}] = \EX[D^2_i]$
{(recall that $D_i$ is the degree of vertex $i$.)}
By \Clm{var}, this is at most $c+c^2 \leq 2c^2$. The expected number of triangles in $G \sim \cG_{V''}$
is at most $n \times 2c^2 \times \Delta/16c^2 = \Delta n/8$. This contradicts \Clm{prune2}.
\end{proof}

{\bf Part 3: Applying the rank lemma to the core}

We are ready to apply the rank bound of \Lem{rank} to prove the final result. The following
lemma contradicts our initial bound on the rank $d$, completing the proof. We will omit
some details in the following proof, and provide
a full proof in the SI.

\begin{lemma} \label{lem:main} $\rank(V'') \geq (\alpha \Delta^4/c^9) n/\lg^2 n$.
\end{lemma}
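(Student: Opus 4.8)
The plan is to apply the rank lemma (\Lem{rank}) to the Gram matrix $M = (V'')^T V''$, whose rank equals $\rank(V'')$. By \Clm{len} there is an index $r^* \in R$ with $2^{r^*} \geq \sqrt{\Delta}/4c$, and by definition of $R$ the bucket $V_{r^*}$ has at least $(\Delta/60c^2)(n/\lg n)$ vectors, each of length in $[2^{r^*}, 2^{r^*+1})$. The key idea is to restrict attention to $W := V_{r^*}$: this is a set of $m := |W| \geq (\Delta/60c^2)(n/\lg n)$ vectors that all have (roughly) the same length $\ell$, with $\ell \geq \sqrt{\Delta}/4c$ and $\ell \leq 2\sqrt n$. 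Since the rank of the full Gram matrix is at least the rank of the principal submatrix indexed by $W$, it suffices to lower bound $\rank(W^T W)$, and now the near-uniform-length situation from Section~\ref{sec:highlevel} applies almost verbatim.

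First I would set up the numerator of the rank lemma applied to the $m \times m$ Gram matrix of $W$: the diagonal entries are $\|\vec v_i\|_2^2 \in [\ell^2, 4\ell^2)$, so $|\sum_i M_{i,i}|^2 \geq m^2 \ell^4$. Next I would bound the denominator $\sum_{i,j} |\vec v_i \cdot \vec v_j|^2$. Split the pairs into those with $\vec v_i \cdot \vec v_j \leq 1$ and those with $\vec v_i \cdot \vec v_j > 1$. For the first group, $\sum (\vec v_i \cdot \vec v_j)^2 \leq \sum |\vec v_i \cdot \vec v_j|$, and I would control $\sum_{i,j}|\vec v_i\cdot\vec v_j|$ by a Lemma~\ref{lem:neg}-style argument: the negative-dot-product mass is at most the positive-dot-product mass, and the positive mass is at most (diagonal) $+$ (positive off-diagonal), where the off-diagonal positive dot products with value $\le 1$ sum to at most the expected degree contribution, hence $O(cm)$, while the diagonal contributes $m \cdot 4\ell^2 \le 16 m n$ — but this crude bound is too weak, so instead I would note the diagonal terms are $\le 4\ell^2$ and more carefully use that each vertex has expected degree at most $c$, giving $\sum_{j}\min(\vec v_i\cdot\vec v_j,1) \le c+1$ and thus the "$\le 1$" off-diagonal pairs contribute $O(cm)$ total. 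For the second group, each such pair is an edge with probability $1$, so there are at most $(c+1)m$ of them, and each contributes at most $(\vec v_i\cdot\vec v_j)^2 \le \|\vec v_i\|_2^2\|\vec v_j\|_2^2 \le 16\ell^4$; hence this group contributes at most $O(c m \ell^4)$. Combining, the denominator is $O(cm(\ell^4 + 1)) = O(cm\ell^4)$ using $\ell \ge \sqrt\Delta/4c$ so $\ell^4 \gtrsim \Delta^2/c^4$ is bounded below (and the "$+1$" is absorbed up to a $\poly(c/\Delta)$ factor).

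Plugging into \Lem{rank}: $\rank(W^TW) \geq \dfrac{m^2\ell^4}{O(cm\ell^4)} = \Omega(m/c) = \Omega\!\left(\dfrac{\Delta}{c^3}\cdot\dfrac{n}{\lg n}\right)$, which already beats the target $(\alpha\Delta^4/c^9)n/\lg^2 n$ by a $\lg n$ factor, leaving ample slack to absorb all the constants and lower-order terms hidden in the $O(\cdot)$'s. I would then conclude $\rank(V'') \geq \rank(W^TW) \geq (\alpha\Delta^4/c^9)n/\lg^2 n$ for a sufficiently small constant $\alpha$, contradicting the assumed bound $d = \rank(\hat V) < \alpha(\Delta^4/c^9)n/\lg^2 n$ (since $\rank(V'') \le d$), which finishes the proof of Theorem~\ref{thm:main}.

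The main obstacle is the bookkeeping in the denominator bound, specifically handling the "$\vec v_i\cdot\vec v_j > 1$" pairs: one must argue there are few of them (at most $\approx cm$) because otherwise some vertex in $W$ would have too many forced neighbors inside $W$, violating the expected-degree-$\le c$ constraint. This is where I would be most careful — the earlier sketch in Section~\ref{sec:highlevel} used "at most $(c-1)n$ such pairs" globally, but here I want the count restricted to $W\times W$, which follows from the same degree argument since $W \subseteq V'$ and every vertex of $V'$ has expected degree at most $c$ by \Clm{prune}, so each vertex has at most $c$ probability-$1$ neighbors. The only other subtlety is ensuring the quantitative constants line up: the bucket size $(\Delta/60c^2)(n/\lg n)$ and the length lower bound $\sqrt\Delta/4c$ must be tracked precisely so the final exponents match $\Delta^4/c^9$ with room to spare, and I would double-check that the "extremely long vector" removal in Part~1 (length $\le 2\sqrt n$) is exactly what makes the "$\ell \le 2\sqrt n$" bound — and hence the "$\ell^4 \le 16 n^2$" control on diagonal-dominated terms — available.
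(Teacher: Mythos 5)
Your proposal is correct, and it uses the same toolkit as the paper's proof --- the rank bound \Lem{rank}, \Lem{neg} to control negative entries, the expected-degree bound for off-diagonal entries in $[0,1]$, the forced-edge count for entries exceeding $1$, and \Clm{len} for the length lower bound --- but it applies the rank lemma to a different matrix. You restrict to the single heaviest bucket $W=V_{r^*}$ with $2^{r^*}\geq \sqrt{\Delta}/4c$ and use $\rank(V'')\geq\rank(W^TW)$, whereas the paper applies \Lem{rank} to the Gram matrix of all of $V''$ and uses the maximal bucket only to lower bound the trace in the numerator. Your restriction buys two things: all lengths in $W$ agree up to a factor of $2$, so the computation really is the uniform-length argument of Section~\ref{sec:highlevel} essentially verbatim, and the denominator runs over only $m=|W|=\Omega((\Delta/c^2)\,n/\lg n)$ indices rather than up to $n$, so you obtain a bound of order $(\Delta^3/c^7)\,n/\lg n$, which is a $\lg n$ factor stronger than the stated $(\alpha\Delta^4/c^9)\,n/\lg^2 n$ (the paper's global version pays that second $\lg n$ because its denominator is over all of $V''$). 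Two details to pin down when writing it out, neither of which is a real gap: (i) for the negative pairs you must first write $|M_{i,j}|^2\leq 4\ell^2\,|M_{i,j}|$ (Cauchy-Schwarz) before invoking \Lem{neg} on $\sum|M_{i,j}|$ --- this is exactly the $L^2$ factor the paper inserts, and your ``\Lem{neg}-style argument'' phrasing elides it; and (ii) your claim that the bound beats the target ``with ample slack'' implicitly uses $\Delta=O(c^2)$ when trading $\Delta^3/c^7$ against $\Delta^4/c^9$; this is automatic, since by \Clm{var} the expected number of triangles incident to a vertex of expected degree at most $c$ is at most $c+c^2$, so the hypothesis already forces $\Delta=O(c^2)$ --- alternatively, just carry the factor $\min(1,\ell^4)$ through and check it still dominates the target for $\alpha$ sufficiently small.
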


\begin{proof} It is convenient to denote the index set of $V''$ be $I$.
Let $M$ be the Gram Matrix $(V'')^T(V'')$, so for $i,j \in I$, 
$M_{i,j} = \vec{v}_i \cdot \vec{v}_j$
By \Lem{rank}, $\rank(V'') = \rank(M) \geq (\sum_{i \in I} M_{i,i})^2/\sum_{i,j \in I} |M_{i,j}|^2$.
Note that $M_{i,i}$ is $\|\vec{v_i}\|^2_2$, which is at least $2^{2r}$
for $\vec{v_i} \in V_r$. Let us denote $\max_{r \in R} 2^r$ by $L$,
so all vectors in $V''$ have length at most $2L$.
By Cauchy-Schwartz, all entries in $M$ are at most $4L^2$.

We lower bound the numerator. 
\begin{eqnarray*}
    & & \big(\sum_{i \in I} \|\vec{v_i}\|^2_2\big)^2 \geq \big(\sum_{r \in R} 2^{2r} |V_r|\big)^2 \\
    & \geq & \big(\max_{r \in R} 2^{2r} (\Delta/60 c^2)(n/\lg n)\big)^2 \\
    & = & L^4(\Delta^2/3600 c^4)(n^2/\lg^2 n)
\end{eqnarray*}

Now for the denominator. We split the sum into four parts and bound each separately.
\begin{eqnarray}
    \sum_{i,j \in I} |M_{i,j}|^2 & = & \sum_{i \in I} |M_{i,i}|^2 + \sum_{\substack{i,j \in I \\ i \neq j, M_{i,j} \in [0,1]}} |M_{i,j}|^2 \nonumber \\
    & + & \sum_{\substack{i,j \in I \\ i \neq j, M_{i,j} > 1}} |M_{i,j}|^2 + \sum_{\substack{i,j \in I \\ M_{i,j} < 0}} |M_{i,j}|^2 \nonumber :! 
\end{eqnarray}
Since $|M_{i,i}| \leq L^2$, the first term is at most $4nL^4$.
For $i \neq j$ and $M_{i,j} \in [0,1]$, the probability that edge $(i,j)$ is present is precisely $M_{i,j}$.
Thus, for the second term, 
\begin{equation}
\sum_{\substack{i,j \in I \\ i \neq j, M_{i,j} \in [0,1]}} |M_{i,j}|^2 
\leq \sum_{\substack{i,j \in I \\ i \neq j, M_{i,j} \in [0,1]}} M_{i,j}
\leq 2cn
\end{equation}
For the third term, we observe that when $M_{i,j} > 1$ (for $i \neq j$),
then $(i,j)$ is an edge with probability $1$. There can be at most $2cn$ pairs $(i,j)$, $i \neq j$,
such that $M_{i,j} > 1$. Thus, the third term is at most $2cn \cdot (4L^2)^2 = 32cnL^4$.

Now for the fourth term. Note that $M$ is a Gram matrix, so we can invoke \Lem{neg} on its entries.
\begin{eqnarray}
    \sum_{\substack{i,j \in I \\ M_{i,j} < 0}} |M_{i,j}|^2  
    & \leq & L^2\sum_{\substack{i,j \in I \\ M_{i,j} < 0}} |M_{i,j}| \nonumber \\
    & \leq & L^2(\sum_{i \in I} |M_{i,i}| + \sum_{\substack{i,j \in I \\ M_{i,j} > 0}} |M_{i,j}|) \nonumber \\
    & \leq & 4nL^4 + L^2 \sum_{\substack{i,j \in I \\ M_{i,j} \in [0,1]}} |M_{i,j}|  + 
4L^4 \sum_{\substack{i,j \in I \\ M_{i,j} > 1}} 1 \nonumber \\
    & \leq & 4nL^4 + 2cn L^2 + 8cnL^4
\end{eqnarray}
Putting all the bounds together,
we get that $\sum_{i,j \in I} |M_{i,j}|^2 \leq n(4L^4 + 2c + 32cL^4 + 4L^4 + 2cL^2 + 8cL^4)
\leq 32n(L^4 + c(1 + L^2 + L^4))$. If $L \leq 1$, we can upper bound
by $128cn$. If $L \geq 1$, we can upper bound by $128cnL^4$. In either case,
$128cn(1+L^4)$ is a valid upper bound.

%

Crucially, by \Clm{len}, $L \geq \sqrt{\Delta}/4c$. Thus, $4^4 c^4 L^4/\Delta^2 \geq 1$.
Combining all the bounds (and setting $\alpha < 1/(128\cdot 3600 \cdot 4^4)$),
\begin{eqnarray*}
    \rank(V'') & \geq & \frac{L^4(\Delta^2/3600 c^4)(n^2/\lg^2 n)}{128cn(1+16L^4)} \\
    & \geq &
\frac{L^4(\Delta^2/3600 c^4)(n/\lg^2 n)}{128cn(4^4 c^4 L^4/\Delta^2+16L^4)} \\
& \geq & (\alpha \Delta^4/c^9)(n/\lg^2 n) \nonumber
\end{eqnarray*}
\end{proof}

\section{Details of empirical results} \label{sec:emp-details}
	
 {\bf Data Availability:}   The datasets used are summarized in \Tab{datasets}. 
   We present here four publicly available datasets from different domains. The \texttt{ca-HepPh} is a co-authorship network
   \texttt{Facebook} is a social network, \texttt{cit-HepPh} is a citation network, all obtained from the SNAP graph database~\cite{Snap}.
   The \texttt{String\_hs} dataset is a protein-protein-interaction network obtained from~\cite{string}. (The citations
   provide the link to obtain the corresponding datasets.)

   {We first describe the primary experiment, used to validate Theorem~\ref{thm:main}
   on the SVD embedding.}
	We generated a $d$-dimensional embedding for various values of $d$
    using the SVD.
    Let $G$ be a graph with the $n \times n$ (symmetric) adjacency matrix $A$,
    with eigendecomposition $\Psi \Lambda \Psi^T$.  Let $\Lambda_d$ be the matrix with the $d \times d$ diagonal matrix 
    with the $d$ largest magnitude eigenvalues of $A$ along the diagonal. Let $\Psi_d$ be the $n \times d$ matrix 
    with the corresponding eigenvectors as columns. We compute the matrix $A_d = \Psi_d \Lambda_d \Psi_d^T$ and 
    refer to this as the $d$ spectral embedding of $G$. This is the standard PCA approach.

	From the spectral embeddings, we generate a graph from $A_d$ by considering every pair of vertices $(i, j)$ and generate a random value in $[0, 1]$. If the $(i, j)^{\textrm{th}}$ entry of $A_d$ is greater than the random value generated, the edge is added to the graph. Otherwise the edge is not present. This is the same as taking $A_d$ and setting all negative values to 0, and all values greater than 1 to 1 and performing Bernoulli trials for each edge with the resulting probabilities. 
    In all the figures, this is referred to as the ``SVD TDP" (truncated dot product) embedding.
		
\begin{figure*}
\begin{center}
\begin{tabular}{|c |c |c |c|} 
	\hline
		\textbf{Dataset name} & \textbf{Network type} & \textbf{Number of nodes} & \textbf{Number of edges} \\
		\hline
		Facebook \cite{Snap} & Social network & 4K & 88K \\
		\hline
		cit-HePh \cite{Arnet-data}& Citation & 34K & 421K \\
        \hline
		String\_hs \cite{string} & PPI & 19K & 5.6M \\
		\hline
		ca-HepPh \cite{Snap}& Co-authorship & 12K & 118M \\
		\hline
\end{tabular}
\captionof{table}{\small Table of datasets used}
\label{tab:datasets}
\end{center}
\end{figure*}

\begin{figure*}[h!]
        \includegraphics[scale=.37]{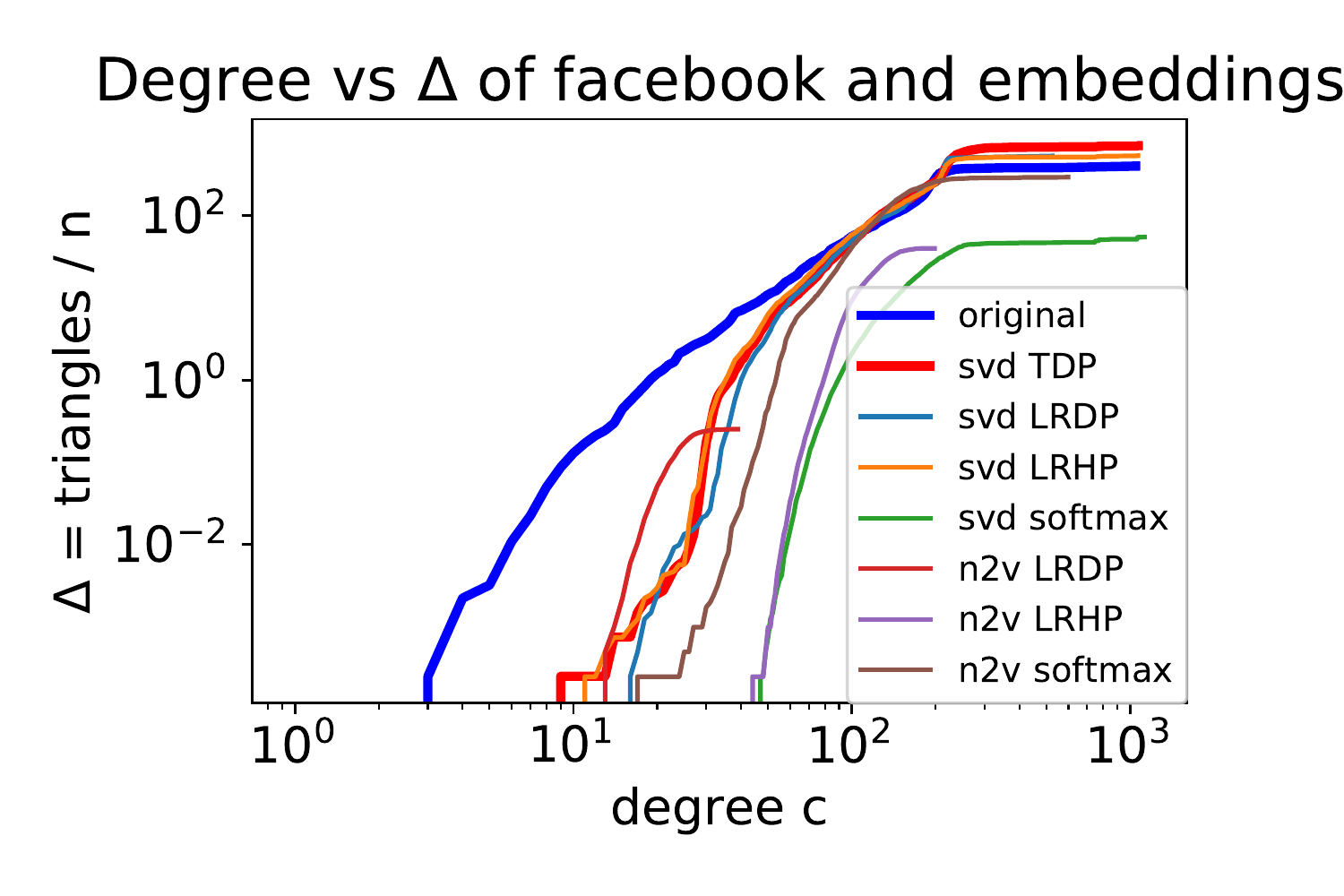}
        \includegraphics[scale=.37]{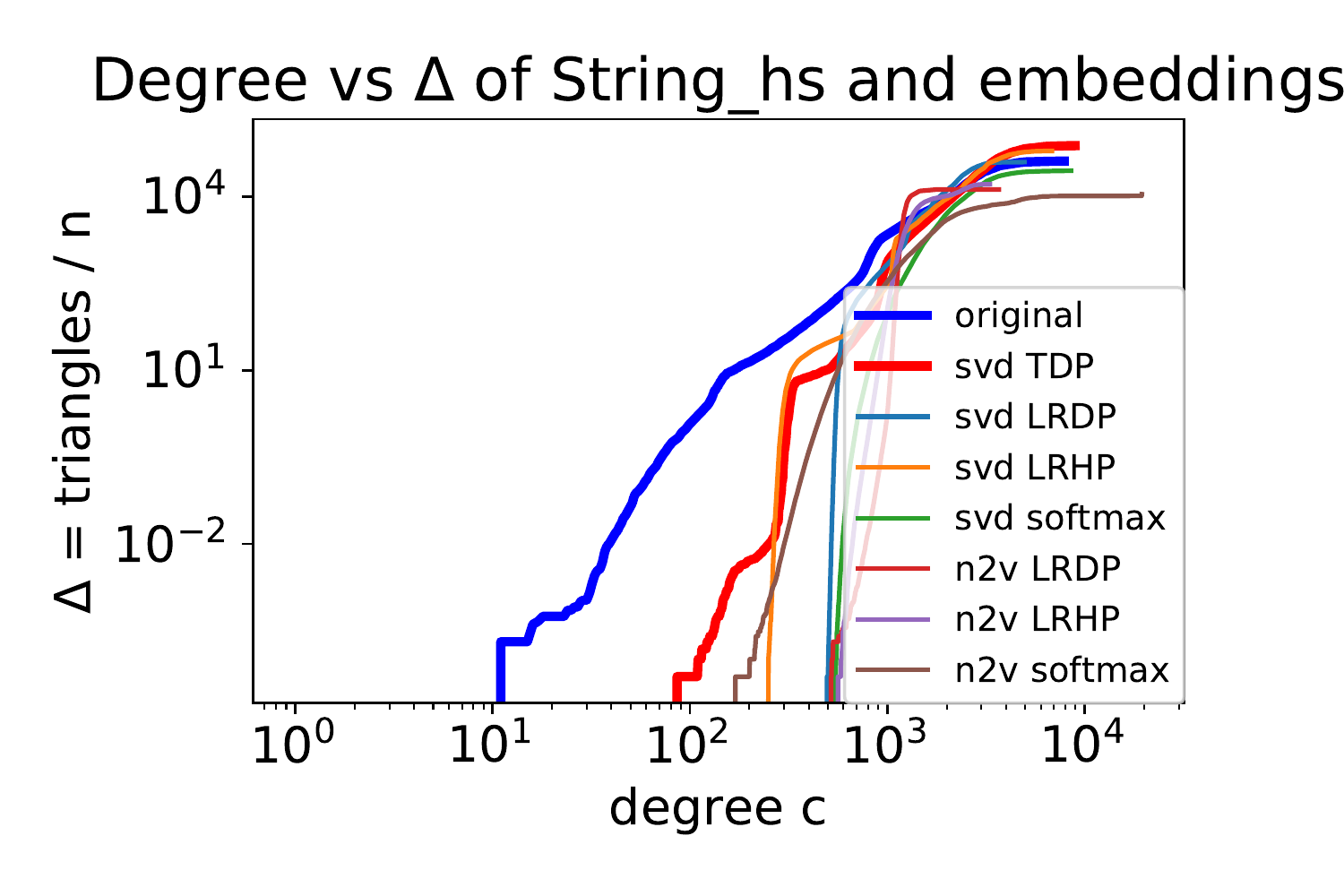} 
        \includegraphics[scale=.37]{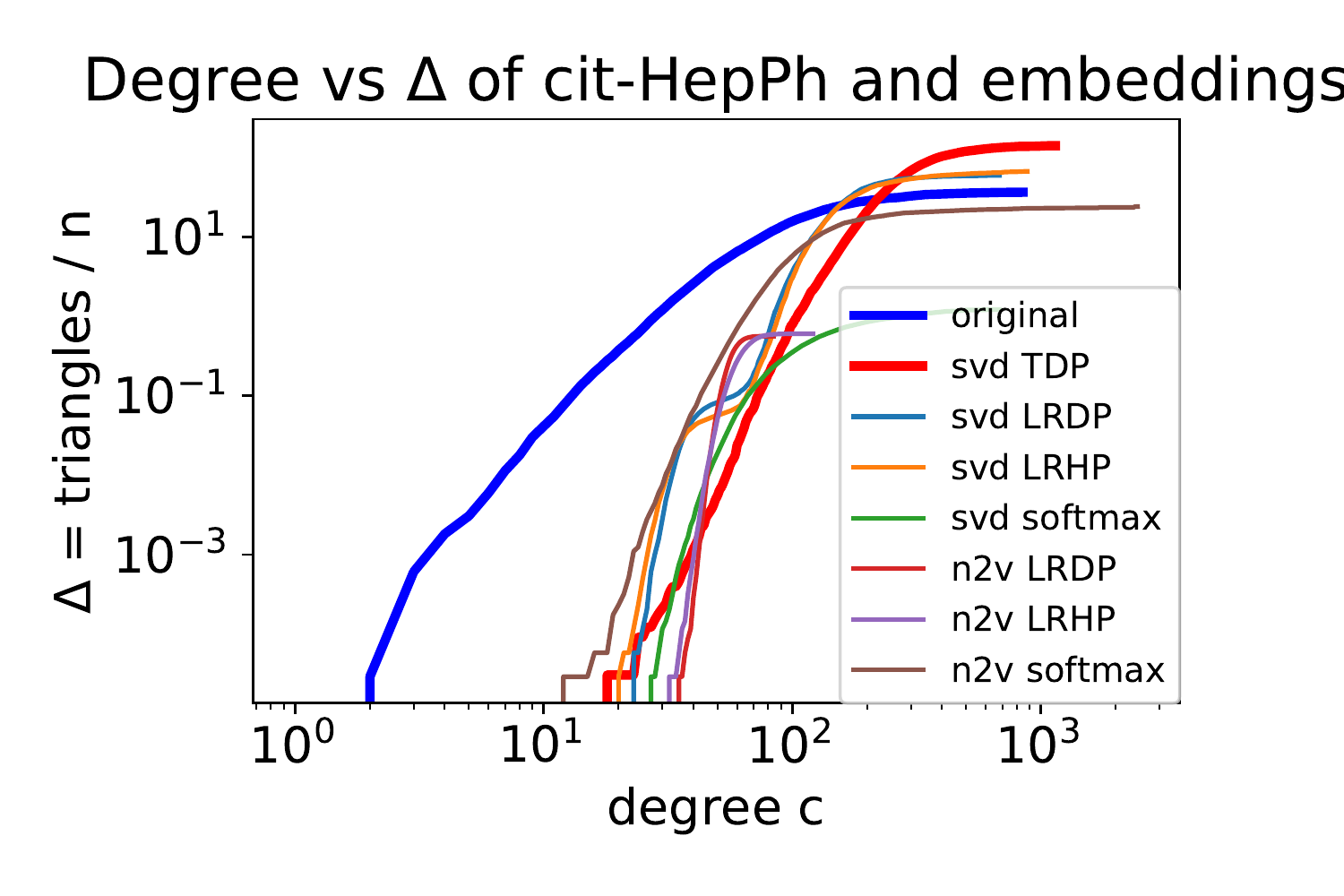}
				\captionof{figure}{\small Plots of degree $c$ vs $\Delta$: For each network,
        we plot $c$ versus the total number of triangles only involving vertices of
        degree at most $c$. We divide the latter by the number of vertices,
        so it corresponds to $\Delta$, as in the main definition.
        In each subfigure, we plot these
        both for the original graph, and the maximum $\Delta$ in 
        a set of 100 samples from a 100-dimensional embedding. Observe how the embeddings generate
        graphs with very few triangles among low degree vertices. The gap in $\Delta$ for low degree is
        2-3 orders of magnitude in all instances.} \label{fig:triangle-distros}
\end{figure*}

\begin{figure*}[h!]
        \includegraphics[scale=.3]{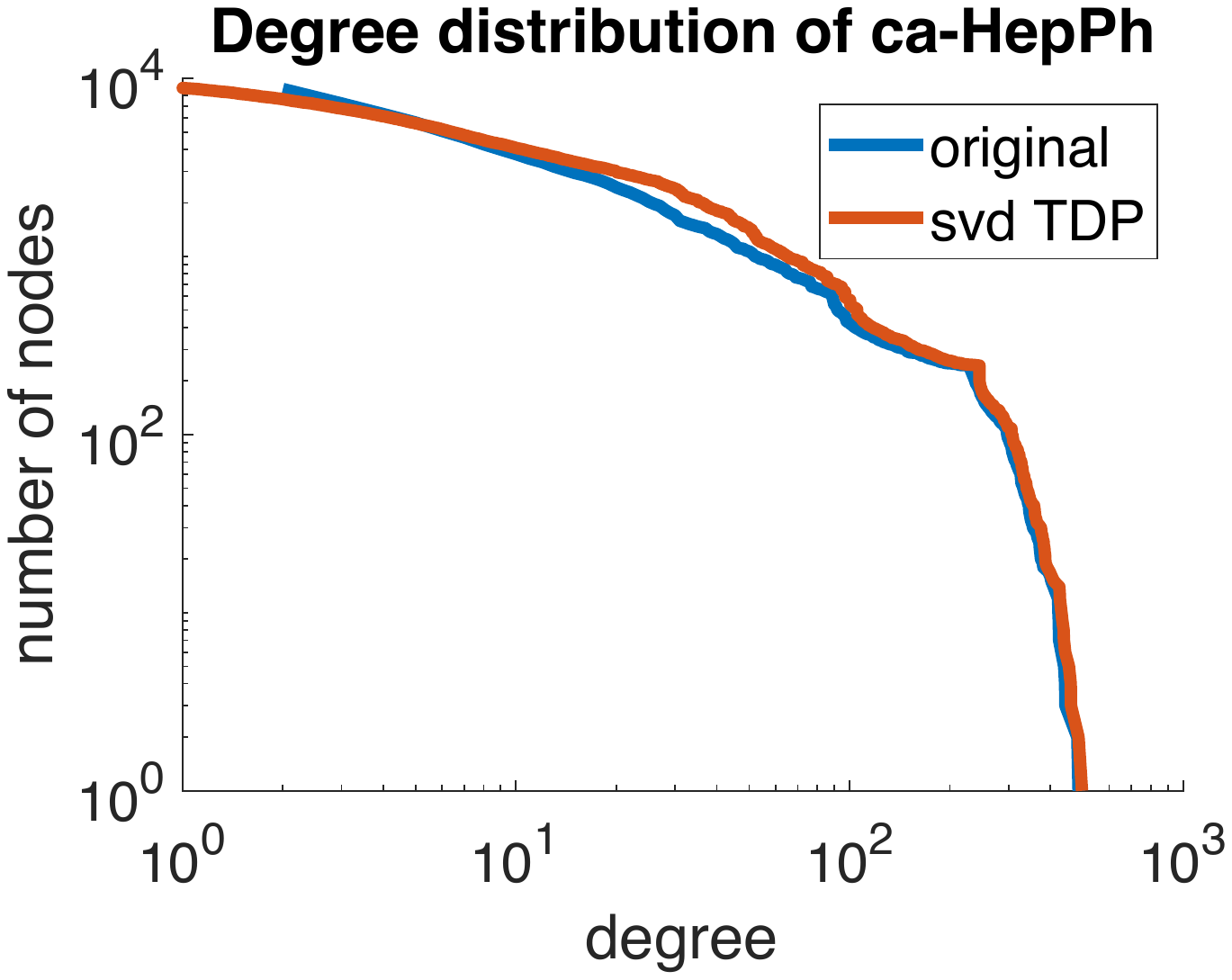}
        \includegraphics[scale=.3]{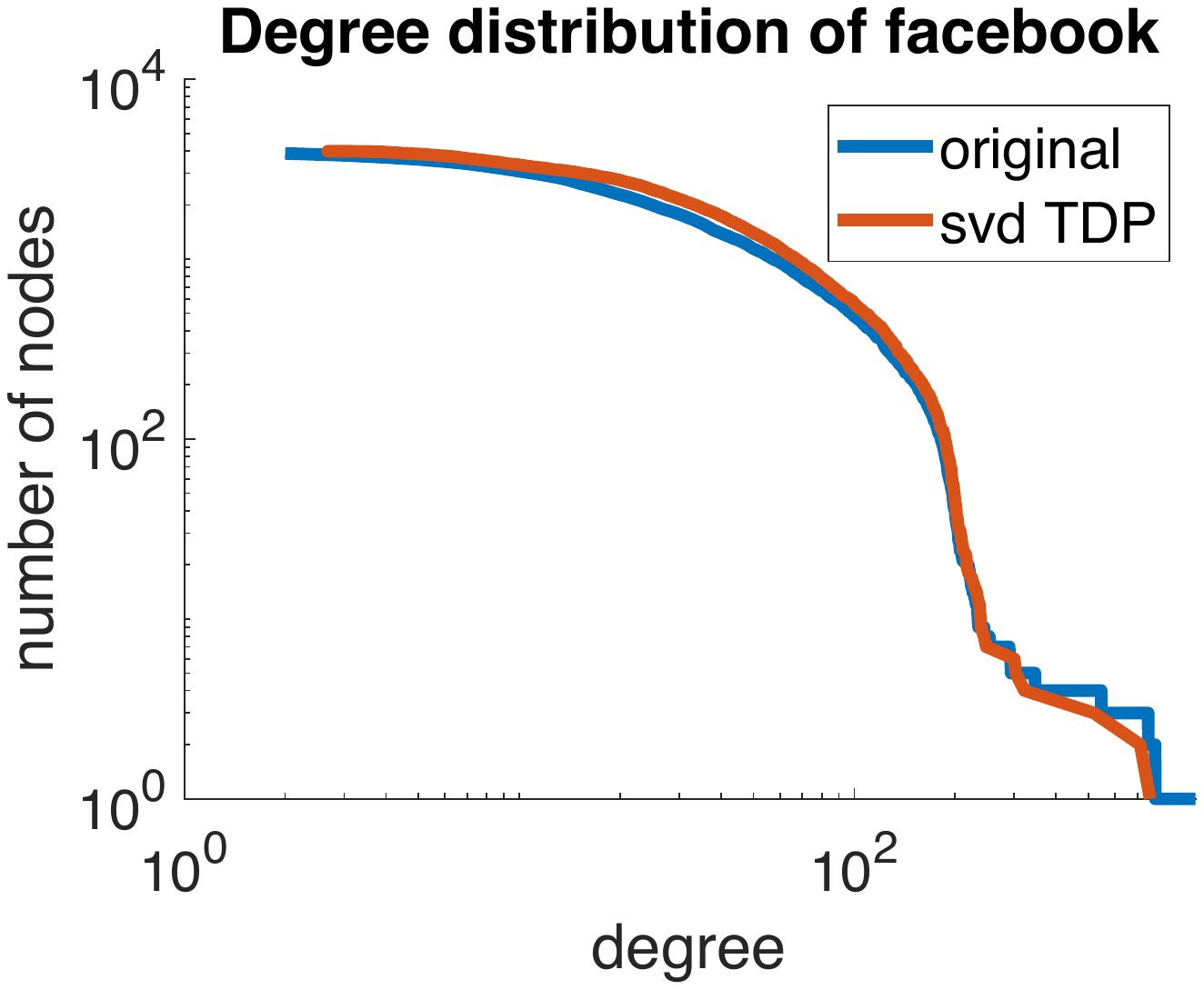}
        \includegraphics[scale=.3]{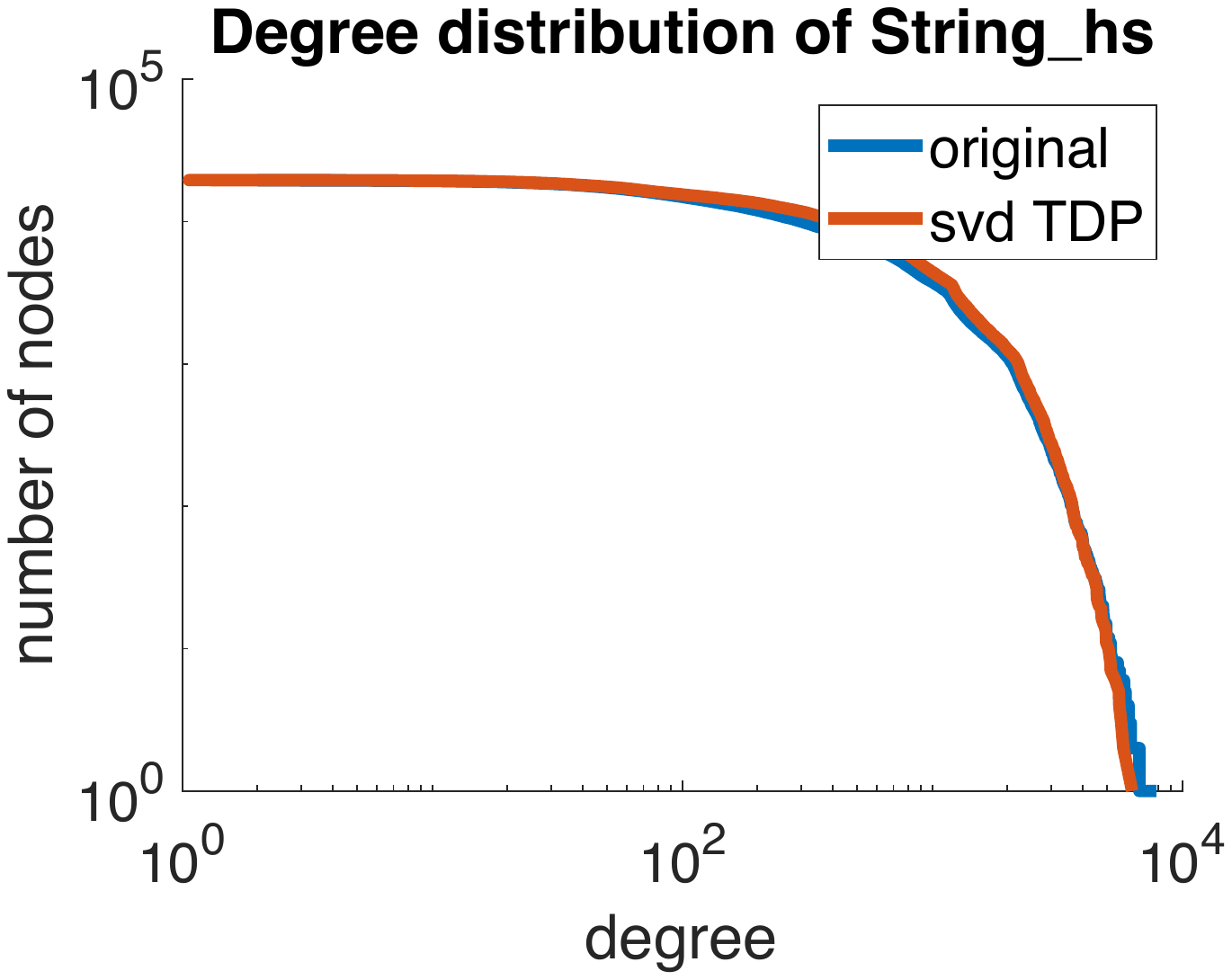} 
        \includegraphics[scale=.3]{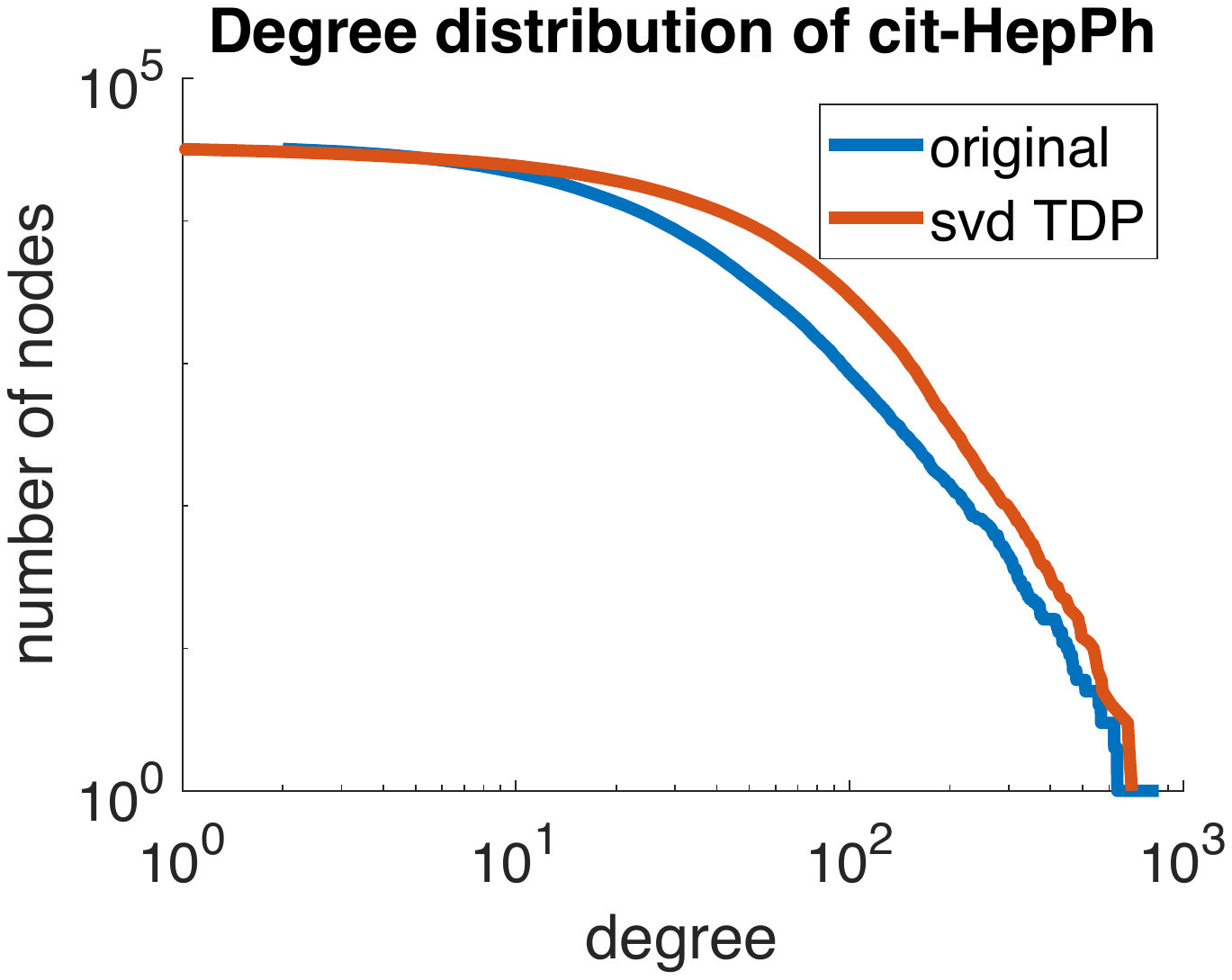}
				\captionof{figure}{\small Plots of degree distributions: For each network,
                we plot the true degree distribution vs the expected degree distribution of a 100-dimensional
                embedding. Observe how the embedding does capture the degree distribution quite accurately 
                at all scales.} \label{fig:degdist}
\end{figure*}

\subsection{Triangle distributions}
  To generate Figure~\ref{fig:intro-triangle-distro} and Figure~\ref{fig:triangle-distros}, we calculated the number of triangles incident to vertices of different degrees in both the original graphs and the graphs generated from the embeddings. Each of the plots shows the number of triangles in the graph on the vertical axis and the degrees of vertices on the horizontal axis. Each curve corresponds to some graph, and each point $(x, y)$ in a given curve shows that the graph contains $y$ triangles if we remove all vertices with degree at least $x$. We then generate
  100 random samples from the 100-dimensional embedding, as given by SVD (described above). For each value of $c$,
  we plot the maximum value of $\Delta$ over all the samples. This is to ensure that our results are not affected
  by statistical variation (which was quite minimal).

\subsection{Alternate graph models} \label{sec:alt}

{ We consider three other functions of the dot product, to construct graph distributions
from the vector embeddings. \new{Details on
parameter settings and the procedure used for the optimization are given in the SI.}

{\em Logistic Regression on the Dot Product (LRDP):} We consider the probability
    of an edge $(i,j)$ to be the logistic function $L(1+\exp(-k(\vec{v}_i \cdot \vec{v}_j - x_0)))^{-1}$,
    where $L, k, x_0$ are parameters. Observe that the range of this function is $[0,1]$,
    and hence can be interpreted as a probability. We tune these parameters to fit the expected
    number of edges, to the true number of edges. Then, we proceed as in the TDP experiment.
    We note that the TDP can be approximated by a logistic function, and thus the LRDP
    embedding is a ``closer fit" to the graph than the TDP embedding.

{\em Logistic Regression on the Hadamard Product (LRHP):} This is inspired
    by linear models used on low-dimensional embeddings~\cite{HaYiLe17}. Define the Hadamard
    product $\vec{v}_i \odot \vec{v}_j$ to be the $d$-dimensional vector where the $r$th
    coordinate is the product of $r$th coordinates. We now fit a logistic function over
    linear functions of (the coordinates of) $\vec{v}_i \odot \vec{v}_j$. This is a significantly
    richer model than the previous model, which uses a fixed linear function (sum). Again,
    we tune parameters to match the number of edges.
    The fitting of LRDP and LRHP was done
    using the Matlab function {\tt glmfit} (Generalized Linear Model Regression Fit)
    \cite{matlab}. The distribution parameter was set to ``binomial", since
    the total number of edges is distributed as a weighted binomial.

{\em Softmax:} This is inspired by low-dimensional embeddings for random walk
    matrices~\cite{PeAlSk14,grover2016node2vec}. The idea is to make the probability
    of edge $(i,j)$ to be proportional to softmax, 
    $\exp(\vec{v}_i \cdot \vec{v}_j)/\sum_{k \in [n]} \vec{v}_i \cdot \vec{v}_k$.
    This tends to push edge formation even for slightly higher dot products, and one might
    imagine this helps triangle formation. We set the proportionality constant separately for
    each vertex to ensure that the expected degree is the true degree. The probability matrix
    is technically undirected, but we symmetrize the matrix. 

{\bf {\sc node2vec} experiments}: 
\new{We also applied {\sc node2vec},
a recent deep learning based graph embedding method~\cite{grover2016node2vec}, to generate vector representations
of the vertices. 
We used the optimized C++
implementation~\cite{n2vcpp} for {\sc node2vec}, which is equivalent
to the original implementation provided by the authors~\cite{n2v}. For all our experiments, we use the default
settings of walk length of 80, 10 walks per node, p=1 and q=1. 
The {\sc node2vec} algorithm tries to model the random walk
matrix associated with a graph, not the raw adjacency matrix. The dot products
between the output vectors $\vec{v}_i \cdot \vec{v}_j$ are used to model 
the random walk probability of going from $i$ to $j$, rather than the presence of an edge. It does not make
sense to apply the TDP function on these dot products, since this will
generate (in expectation) only $n$ edges (one for each vertex). We apply
the LRDP or LRHP functions, which use the {\sc node2vec} vectors as inputs to a machine learning
model that predicts edges.
}

In Figures~\ref{fig:intro-triangle-distro} and~\ref{fig:triangle-distros}, we show results for all the datasets. 
We note that for all datasets and all embeddings, the models fail to capture the low-degree triangle behavior.


\subsection{Degree distributions}

{We observe that the low-dimensional embeddings obtained from SVD and the truncated dot product
can capture the degree distribution accurately. In Figure~\ref{fig:degdist}, we plot the degree
distribution (in loglog scale) of the original graph with the expected degree distribution
of the embedding. For each vertex $i$, we can compute its expected degree by the sum $\sum_i p_{ij}$,
where $p_{ij}$ is the probability of the edge $(i,j)$. In all cases, the expected degree distributions
is close to the true degree distributions, even for lower degree vertices. 
The embedding successfully captures the ``first order"
connections (degrees), but not the higher order connections (triangles). We believe that this
reinforces the need to look at the triangle structure to discover the weaknesses of low-dimensional
embeddings.}

%
%
%
\subsection{Detailed relationship between rank and triangle structure} \label{sec:relation}

For the smallest {\tt Facebook} graph, we were able to compute the entire set of eigenvalues.
This allows us to determine how large a rank is required to recreate the low-degree triangle structure.
In Figure~\ref{fig:full-spectrum}, for varying rank of the embedding,
we plot the corresponding triangle distribution.
In this plot, we choose the embedding given by the eigendecomposition
(rather than SVD), since it is guaranteed to converge to the correct triangle distribution
for an $n$-dimensional embedding ($n$ is the number of vertices). The SVD and eigendecomposition
are  mostly identical for large singular/eigenvalues, but tend to be different (up to a sign)
for negative eigenvalues.

We observe that even a 1000 dimensional embedding does not capture the $c$ vs $\Delta$ plots
for low degree. Even the rank 2000 embedding
is off the true values, though it is correct to within an order of magnitude.
This is strong corroboration of our main theorem, which says that near linear
rank is needed to match the low-degree triangle structure. 

\begin{figure}
        \includegraphics[scale=0.5]{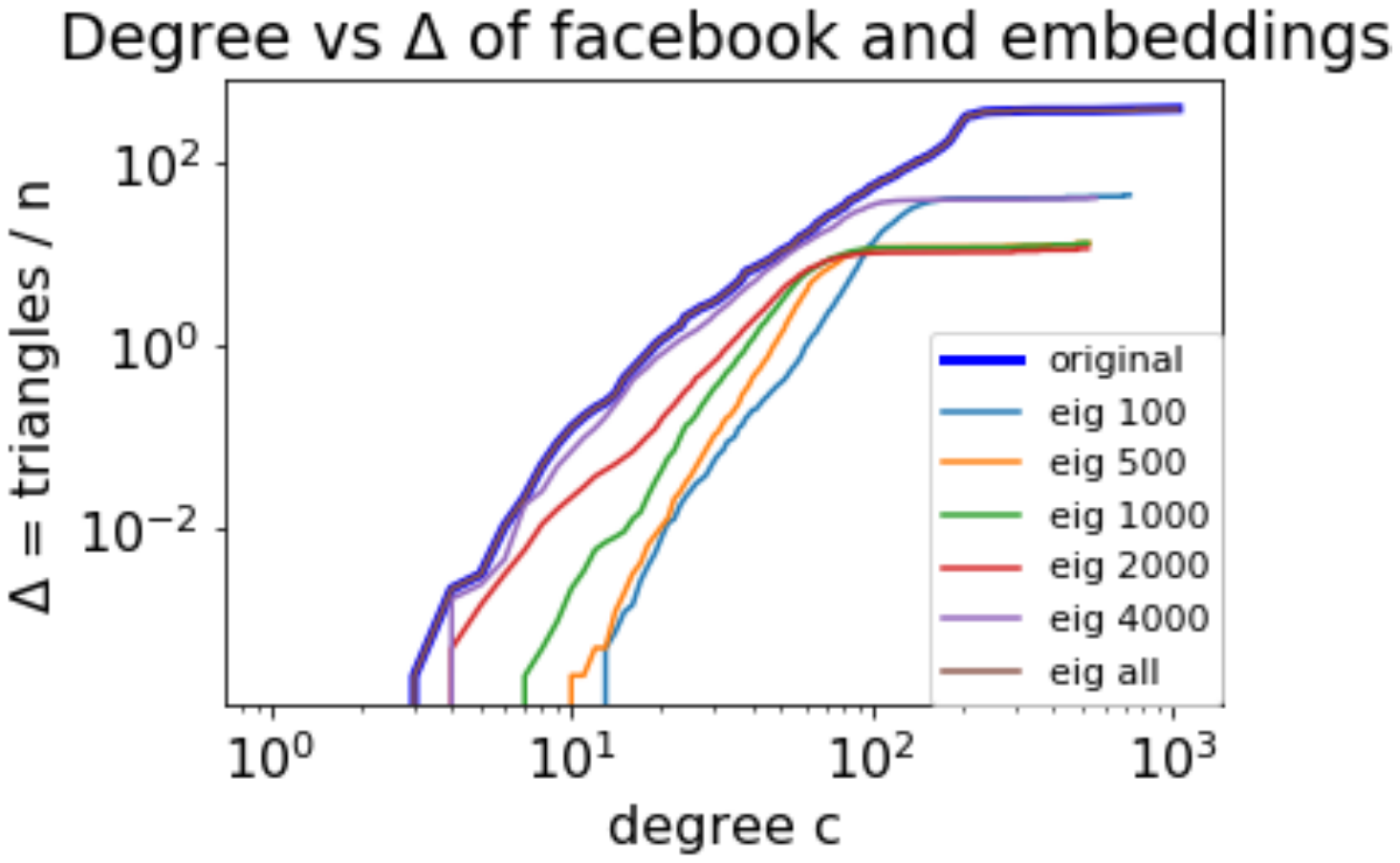}
				\captionof{figure}{\small Plots of degree $c$ vs $\Delta$, for varying rank: 
                For the {\tt Facebook} social network, for varying rank of embedding, we plot $c$ versus the total number of triangles only involving vertices of
        degree at most $c$. The embedding is generated by taking the top eigenvectors. Observe how
        even a rank of 2000 does not suffice to match the true triangle values for low degree.} \label{fig:full-spectrum}
\end{figure}

\bibliographystyle{alpha}
\bibliography{../embeddings}
\end{document}